\def\eqref#1{equation~\ref{#1}}
\def\1{\bm{1}}
\DeclareMathAlphabet{\mathsfit}{\encodingdefault}{\sfdefault}{m}{sl}
\SetMathAlphabet{\mathsfit}{bold}{\encodingdefault}{\sfdefault}{bx}{n}
\DeclareMathOperator*{\argmax}{arg\,max}
\DeclareMathOperator*{\argmin}{arg\,min}
\definecolor{gold}{rgb}{1.0, 0.84, 0.0}
\newcommand\openbigstar[1][0.7]{%
  \scalerel*{%
    \stackinset{c}{-.125pt}{c}{}{\scalebox{#1}{\color{gold}{$\bigstar$}}}{%
      $\bigstar$}%
  }{\bigstar}
}
\theoremstyle{plain}
\declaretheorem[name=Theorem,numberwithin=section]{theorem}
\newtheorem{definition}[theorem]{Definition}
\definecolor{applegreen}{rgb}{0.55, 0.71, 0.0}
\definecolor{darkgreen}{rgb}{0.0, 0.2, 0.13}
\definecolor{lightpastelpurple}{rgb}{0.69, 0.61, 0.85}
\newcommand{\purpcirc}{{\tikz\draw[fill=lightpastelpurple,draw=black] (0,0) circle (0.12);}}
\newcommand{\X}{\texttt{PEF}}
\definecolor{ao}{rgb}{0.0, 0.5, 0.0}
\definecolor{royalblue}{rgb}{0.25, 0.41, 0.88}
\definecolor{applegreen}{rgb}{0.55, 0.71, 0.0}
\begin{document}

\runningtitle{Fundamental Limits of Perfect Concept Erasure}

\runningauthor{Fundamental Limits of Perfect Concept Erasure}

\twocolumn[

\aistatstitle{Fundamental Limits of Perfect Concept Erasure}

\aistatsauthor{\hspace{.5cm}Somnath Basu Roy Chowdhury$^1$ \quad Avinava Dubey$^2$ \quad Ahmad Beirami$^3$ \quad Rahul Kidambi$^2$}
\aistatsauthor{Nicholas Monath$^3$ \quad Amr Ahmed$^2$ \quad Snigdha Chaturvedi$^1$}
\vspace{0.2cm}
\aistatsaddress{$^1$UNC Chapel Hill \quad $^2$Google Research \quad  $^3$Google DeepMind \\ 
\texttt{\href{mailto:somnath@cs.unc.edu, snigdha@cs.unc.edu}{\{somnath, snigdha\}@cs.unc.edu}}, 
\texttt{\href{mailto:avinavadubey@google.com, beirami@google.com, rahulkidambi@google.com, nmonath@google.com}{\{avinavadubey, beirami, rahulkidambi, nmonath\}@google.com}}
} 
]

\begin{abstract}
    Concept erasure is the task of erasing information about a concept (e.g., gender or race) from a representation set while retaining the maximum possible utility --- information from original representations. Concept erasure is useful in several applications, such as removing sensitive concepts to achieve fairness and interpreting the impact of specific concepts on a model's performance. Previous concept erasure techniques have prioritized robustly erasing concepts over retaining the utility of the resultant representations. However, there seems to be an inherent tradeoff between erasure and retaining utility, making it unclear how to achieve perfect concept erasure while maintaining high utility. In this paper, we offer a fresh perspective toward solving this problem by quantifying the fundamental limits of concept erasure through an information-theoretic lens.
    Using these results, we investigate constraints on the data distribution and the erasure functions required to achieve the limits of perfect concept erasure. Empirically, we show that the derived erasure functions achieve the optimal theoretical bounds. Additionally, we show that our approach outperforms existing methods on a range of synthetic and real-world datasets using GPT-4 representations.
\end{abstract}

\section{INTRODUCTION}
\label{sec:intro}
Modern large-scale machine learning models~\citep{achiam2023gpt, team2023gemini} trained in a self-supervised manner have showcased an impressive array of capabilities across a range of applications~\citep{ahn2022can, sun2023short, singhal2023publisher}. Owing to their success, in many applications practitioners rely on data representations from large models~\citep{bert, vit, radford2021learning, touvron2023llama} and fine-tune smaller networks on top of these representations. Such representations often encode a diverse set of \textit{concepts}~\citep{hinton1986learning} about the underlying data. For example, concepts associated with a video representation 
can be the geographic location or pose of a person. Concepts can also be more abstract, e.g., text representations may encode writing style~\citep{patel2023learning}, topics~\citep{sarkar2023zero}, or even sensitive information like gender or race of the author~\citep{bolukbasi2016man}. 
Since practitioners cannot control the concepts encoded in representations, machine learning models may learn to rely on such sensitive concepts, thereby inadvertently impacting downstream applications. For example, while developing a resume screening system, an organization may want to ensure that the candidate's gender does not affect the hiring decisions.  This calls for techniques that enable practitioners to erase certain concepts encoded in data representations.

To address these challenges, we focus on \textit{concept erasure}~\citep{inlp, rlace, kram}, i.e., removing information about a concept (e.g., gender) from a representation set.  This is performed by transforming the original representations using an erasure function. The transformed representations should encode minimal information about the erased concept (\textit{privacy}) while retaining the maximal information about the original representations (\textit{utility}). This differs from conventional invariant representation learning~\citep{zhao2019learning, nguyen2021domain, zhao2022fundamental, sadeghi2022characterizing}, which only retains information about a specific task (e.g., hiring). Since concept erasure is task-agnostic, its output representations can be used in a wide range of applications. This is useful when a data distributor wants to share data representations with its customers while safeguarding sensitive concepts, such as the gender or race of the data producer~\citep{zemel2013learning}. Additionally, prior work has shown that concept erasure is useful in improving  fairness~\citep{inlp, farm, belrose2024leace} and interpretability~\citep{gonen2020s, elazar2021amnesic}.

{A long line of work~\citep{bolukbasi2016man, inlp, rlace, ravfogel2022kernelized, farm, kram} has focused on developing effective concept erasure techniques. Most of these approaches prioritize only erasing concept information. However, in concept erasure, we want to achieve perfect erasure while retaining maximum utility.
Motivated by this challenge, we aim to answer the research question: \textit{What is the maximum utility that can be retained while perfectly erasing a concept, and what functions achieve perfect erasure?}}

In this work, we take a step towards answering the above question by deriving the functions that achieve perfect concept erasure. We formalize the concept erasure task using information-theoretic tools, introducing the definitions of perfect and relaxed 
concept erasure. 
We borrow tools from privacy and make a conceptual connection with trade-offs between utility and privacy~\citep{sreekumar2019optimal, bertran2019adversarially, razeghi2020perfect, atashin2021variational} and build on \citep{calmon2015fundamental, du2017principal} that characterize the fundamental limits of perfect privacy. 
In these data settings, we derive the erasure functions that achieve perfect concept erasure, which we call \textit{perfect erasure functions} ({\texttt{PEFs}}). 
In controlled settings, we show that existing approaches either reveal a significant amount of concept information or entirely lose all utility information from the original representations. 
 
 To summarize, our primary contributions are: 
\begin{itemize}[topsep=1pt, leftmargin=*, noitemsep]
    \itemsep1mm
    \item We formalize the task of concept erasure, introducing the definitions of perfect and relaxed concept erasure using information-theoretic tools. 
    \item We hypothesize constraints on the underlying data required to achieve the information-theoretic outer bounds for perfect concept erasure. 
    \item We derive the perfect erasure functions under different data conditions. 
    \item We extensively evaluate {\X} on synthetic and real-world datasets, including GPT-4 representations.
\end{itemize}

\vspace{-5pt}
\section{BACKGROUND}
\vspace{-5pt}

\textbf{Concept Erasure}. The idea of removing information from data representations was motivated by the problem of removing gender information from GloVe embeddings~\citep{bolukbasi2016man}. This task was later termed as \textit{concept erasure}~\citep{inlp, rlace}, which involved removing information about a \textit{concept} (a random variable) from a representation set. Concept erasure differs from conventional approaches to removing information from representations, such as adversarial learning~\citep{zemel2013learning, pmlr-v80-madras18a, elazar2018adversarial, sadeghi2022characterizing, iwasawa2018censoring, ads, xie2017controllable}, which aim to learn invariant representations for a specific task. For concept erasure, the user does not have access to a specific task and should retain the maximum utility from original representations.  

A simplified form of this task is \textit{linear concept erasure}~\citep{inlp, dev-etal-2021-oscar, rlace}, which removes concept information in a manner that prevents a linear adversary (e.g., a linear classifier) from extracting it.
Initial approaches involve projecting representations onto the nullspace of linear concept classifiers, either once \citep{bolukbasi2016man, haghighatkhah-etal-2022-better} or in an iterative manner \citep{inlp}. Subsequent work~\citep{rlace} generalized the iterative nullspace projection objective as a minimax game.
Recently, \citet{belrose2024leace} 
show that a concept is perfectly linearly erased only when different concept groups share the same centroid representation and provide an optimal algorithm to achieve this condition.

Though theoretically sound and effective in practice, linear concept erasure has limitations, as the erased concept can still be retrieved using a non-linear adversary. 
Recent works have focused on performing \textit{general concept erasure}, which tries to protect the concept against an arbitrarily strong adversary. One approach, utilized by \citep{ravfogel2022kernelized, shao2023gold}, involves projecting inputs onto a non-linear space and then applying linear concept erasure techniques. On the other hand, \citet{farm, kram} learn a parameterized erasure function using rate-distortion based objectives. While empirically promising, these approaches cannot theoretically guarantee perfect erasure nor the retention of maximum possible utility.

General concept erasure techniques often showcase a significant information/utility loss~\citep{kram}, which is expected as perfectly erasing a concept may remove other information. An extreme case occurs when a function outputs random representations, which perfectly erases a concept but results in complete information loss~\citep{lowy2022stochastic}. Therefore, we quantify the maximum possible information that can be retained while perfectly erasing a concept. 
In this work, we use mutual information to quantify the utility and privacy during concept erasure~\citep{sankar2013utility}. Mutual information minimization has been used to improve privacy~\citep{duchi2014privacy, bertran2019adversarially} and fairness~\citep{mary2019fairness, lowy2022stochastic} in many applications. However, we do not optimize the mutual information terms directly; instead, we utilize their outer bounds. This problem can be studied using other divergence measures as well. We use mutual information as its outer bounds enable us to derive the analytic form of the erasure functions.

 A long line of work in information theory \citep{du2017principal, sreekumar2019optimal, bertran2019adversarially, atashin2021variational} has focused on achieving an optimal tradeoff between privacy, utility, and function complexity by optimizing the associated mutual information terms. Bottleneck CLUB~\citep{razeghi2023bottlenecks} provides a general framework to unify these approaches. Concept erasure is a special case of this framework, where we focus only on utility and privacy (also called the privacy funnel~\citep{makhdoumi2014information}). \citet{calmon2015fundamental} was the first to study perfect privacy (or perfect erasure), deriving the PIC conditions necessary to achieve it within the privacy funnel framework. \citet{du2017principal} derived the mutual information bounds for the privacy funnel setting, while \citet{hsu2018generalizing} generalized the privacy funnel problem to a broader class of $f$-divergences.  Building on this, \citet{wang2017estimation, wang2019privacy} focused on erasing functions of the data such that those cannot be reconstructed (using MMSE) and derived sharp privacy-utility tradeoff bounds using $\chi^2$-divergence. More recent works~\citep{razeghi2024deep, huang2024efficient}, leveraged variational approximation using deep networks to optimize the privacy funnel. In contrast to these approaches, we do not optimize the mutual information terms. Instead, we build on the theoretical framework presented by \citet{du2017principal}. Using the mutual information outer bounds from \citep{du2017principal}, we derive the data constraints and erasure functions necessary for achieving perfect concept erasure. Next, we discuss the minimum entropy coupling, which we incorporate into the erasure functions.

\textbf{Minimum Entropy Coupling} (MEC). Minimum entropy coupling $\Gamma$ is a joint distribution over $m$ input probability distributions $(p_1, \ldots, p_m)$, with the minimum entropy. In our work, we are only interested in finding a coupling between two distributions, $\Gamma(p, q)$.
\[\Gamma_{\mathrm{min}}(p, q) = \argmin_\Gamma H_\Gamma(p, q) = -\Gamma_{ij} \log \Gamma_{ij},\]
where $\sum_j \Gamma_{ij} = p_i, \sum_i \Gamma_{ij} = q_j$.
It is easy to see that the minimum entropy, $H_{\mathrm{min}}(p, q)$, achieved by MEC, $\Gamma_{\mathrm{min}}(p, q)$, also maximizes the mutual information: $I(p; q) = H(p) + H(q) - H_{\mathrm{min}}(p, q)$, between two distributions $p$ and $q$. 
However, computing the minimum entropy coupling is NP-Hard~\citep{kovavcevic2015entropy}. Later, \citet{kocaoglu2017entropic} introduced a greedy algorithm to efficiently approximate the MEC. Subsequent works~\citep{kocaoglu2017entropicb, rossi2019greedy, compton2022entropic} showed that the approximation produced by the greedy algorithm is tight. Specifically, while computing the MEC between two distributions, the greedy approach produces approximations within 0.53 bits \citep{compton2023minimum}. We use the greedy algorithm for efficiently computing the MEC to derive the perfect erasure functions. 
In the following sections, we will formalize concept erasure, quantify the outer bounds, and derive perfect erasure functions.

\section{CONCEPT ERASURE}
\label{sec:fce}
\subsection{Problem Statement}
\label{sec:problem}
\vspace{-5pt}
Concept erasure tries to erase the effect of a concept, $A$ (a random variable with support $\mathcal{A}$), from a representation set, $X$ (random variable with support $\mathcal{X}$). 
This is usually done by learning a transformation of the original representations, $Z = f(X)$ (random variable with support $\mathcal{Z}$). We will refer to the transformation map, $f$, as the \textit{erasure function}. In our setup, the erasure function $f$ \textit{only} has access to $X$ and not $A$. This allows $f$ to generalize to examples where the concept $A$ is not available. Note that although prior works have utilized various erasure techniques, they can all be generalized using an erasure function, $f$. For example, linear concept erasure approaches~\citep{bolukbasi2016man, inlp} use a product of projection matrices as $f(x)=\prod_j \mathbf P_jx$, kernelized linear erasure~\citep{ravfogel2022kernelized} use  $f(x)=\prod_j \mathbf P_j \Phi(x)$ (where $\Phi(\cdot)$ is non-linear), \citet{farm, kram, huang2024efficient} explicitly parameterize $f$ using neural networks.

\textbf{Assumptions}. 
We make the following assumptions in our concept erasure setup:\vspace{-5pt}
\begin{enumerate}[topsep=0pt, leftmargin=*, noitemsep, label={({A\arabic*})}]
    \itemsep1mm
    \item The Markov property: $A \longrightarrow X \xlongrightarrow{f} Z$, which implies $I(Z; A|X) = 0$.\label{item:A1} 
    \item The support of the input and output representation sets, $\mathcal{X}$ and $\mathcal{Z}$, are finite.\label{item:A2} 
    \item The support of the concept set is finite, $\mathcal A = \{a_1, a_2, \ldots, a_{|\mathcal{A}|}\}$.
    \label{item:A3} 
    \item {The representations for each concept group are sampled from the distribution, $\forall i \in \{1, \ldots, |\mathcal{A}|\},\; P_i = P({X|A=a_i})$. For all pairs of distributions $(P_i, P_j)$ with $i\neq j$, their supports (defined as $\mathcal{X}_i = \mathrm{supp}(P_i)$) are disjoint, i.e., $\mathcal{X}_i \cap \mathcal{X}_j = \phi$. Similarly, the output representations for each concept group is $P(Z|A=a_i)$.} \label{item:A4} 
    \item The size of the input representation support is larger than the size of concept support, $|\mathcal{X}| > |\mathcal{A}|$. \label{item:A5} 
\end{enumerate}

An example of such a setup is erasing gender information from English word embeddings. The representation set is finite since the number of English words is finite. Therefore, even though data representations are typically continuous, assuming a categorical distribution with a large support set is reasonable.

\textbf{Objective}. The primary objective of concept erasure is to construct an erasure function, $f$, such that these two properties: (a) effectively erase a concept, $A$, achieving $I(Z; A) \approx 0$ and (b) retaining information about the original representations, $X$, achieving high $I(Z; X)$. The resultant objective is (for a small constant $\epsilon>0$):
\begin{equation}
    \max_f I(Z; X) \text{ subject to } I(Z; A) \leq \epsilon,
    \label{eq:ce-obj}
\end{equation}
where $Z=f(X)$. The concept erasure objective shown above is reminiscent of the information bottleneck objective~\citep{tishby2000information}, which focuses on achieving a good tradeoff between $I(Z; A)$ and $I(Z; X)$. In concept erasure, the objective is to robustly erase concept $A$, resulting in near-zero $I(Z; A) \approx 0$, even if it is at the expense of a reduced utility, $I(Z; X)$.

\subsection{Types of Concept Erasure}
We note that optimizing the objective in Eq.~\ref{eq:ce-obj} is difficult, as estimating mutual information using a finite number of samples is challenging~\citep{song2019understanding, mcallester2020formal}. 
{Rather than optimizing the mutual information terms, we use their outer bounds to derive the erasure functions.} 
Next, we introduce several definitions involving concept erasure.
\begin{definition}[Perfect Concept Erasure]
    An erasure function, $f$, achieves \textit{perfect concept erasure} if $I(Z; A)=0$, where $Z=f(X)$. 
    \label{def:pce}
\end{definition}
\vspace{-5pt}
Perfect concept erasure ensures that no information about the concept is present in the resultant representations, $I(Z; A)=0$. However, achieving perfect erasure may not always be feasible. Our next result provides the conditions under which perfect concept erasure is feasible. For deriving the conditions of perfect erasure, we rely on the notion of principal inertia components (PICs)~\citep{du2017principal}. Intuitively, PICs capture the correlation between two random variables $X$ and $A$, and can be viewed as a decomposition of the joint distribution, $p_{A, X}$ (see details in Appendix~\ref{sec:PIC}). 
Note that all of the results in this paper are derived under Assumptions~\ref{item:A1}-\ref{item:A5} unless otherwise stated.
\begin{restatable}[Perfect Concept Erasure Achievability]{lemma}{feasibility}
Under the assumptions~\ref{item:A1}-\ref{item:A4}, for a joint distribution $p_{A, X}$ with finite support defined over $\mathcal{A} \times \mathcal{X}$, perfect concept erasure, $I(Z; A)=0$, is achievable if either (a) smallest principal inertia component, $\lambda_d(A, X) = 0$ or (b) $|\mathcal{X}|>|\mathcal{A}|$.
\label{lem:feasibility}
\end{restatable}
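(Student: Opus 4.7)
The plan is to recast perfect erasure as a linear-algebra question about the conditional-distribution matrix $T\in\mathbb{R}^{|\mathcal{A}|\times|\mathcal{X}|}$ with $T_{ik}=P(X=x_k\mid A=a_i)$. Represent any (possibly stochastic) erasure function by a row-stochastic channel matrix $W\in\mathbb{R}^{|\mathcal{X}|\times|\mathcal{Z}|}$ with $W_{kj}=P(Z=z_j\mid X=x_k)$; by assumption \ref{item:A1} this fully describes the law of $Z$ given $X$. The concept-to-output conditional is then $TW$, and $I(Z;A)=0$ holds iff all rows of $TW$ coincide, i.e.\ $TW=\mathbf{1}_{|\mathcal{A}|}\mathbf{q}^{\top}$ for some distribution $\mathbf{q}$ on $\mathcal{Z}$. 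Since $T\mathbf{1}_{|\mathcal{X}|}=\mathbf{1}_{|\mathcal{A}|}$ (rows of $T$ are probability vectors), this is equivalent to asking every column of $W$ to decompose as $q_j\mathbf{1}_{|\mathcal{X}|}+v_j$ with $v_j\in\ker T$. So achievability of a non-trivial perfect erasure function reduces to (i) exhibiting a non-zero $v\in\ker T$, and (ii) lifting such a $v$ into a valid stochastic $W$.

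Under case (b), assumption \ref{item:A4} gives $T$ disjoint row supports, so $\mathrm{rank}(T)\le|\mathcal{A}|$; whenever $|\mathcal{X}|>|\mathcal{A}|$ the rank-nullity theorem yields $\dim\ker T\ge|\mathcal{X}|-|\mathcal{A}|\ge1$. Moreover, $T\mathbf{1}_{|\mathcal{X}|}=\mathbf{1}_{|\mathcal{A}|}\neq\vzero$ shows $\mathbf{1}_{|\mathcal{X}|}\notin\ker T$, so any non-zero $v\in\ker T$ is genuinely non-constant. Under case (a), I invoke the PIC setup from Appendix~\ref{sec:PIC}: writing $Q_{ik}=P(a_i,x_k)/\sqrt{P(a_i)P(x_k)}$, the identity $T=\mathrm{diag}(P_A)^{-1/2}\,Q\,\mathrm{diag}(P_X)^{1/2}$ (valid since all marginals are strictly positive on the supports) gives $\mathrm{rank}(T)=\mathrm{rank}(Q)$. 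The hypothesis $\lambda_d(A,X)=0$ forces $Q$ (hence $T$) to be rank-deficient relative to $|\mathcal{X}|$, so $\ker T$ again contains a non-constant vector by the same argument.

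It then suffices to lift such a $v$ to a channel. Take $|\mathcal{Z}|=2$, choose $0<\varepsilon<1/(2\max_x|v(x)|)$, and set $w_1=\tfrac12\mathbf{1}_{|\mathcal{X}|}+\varepsilon v$, $w_2=\tfrac12\mathbf{1}_{|\mathcal{X}|}-\varepsilon v$. Then $W=[w_1\;w_2]$ has non-negative entries and rows summing to one, so it is a valid stochastic channel; and $Tw_1=\tfrac12\mathbf{1}_{|\mathcal{A}|}+\varepsilon Tv=\tfrac12\mathbf{1}_{|\mathcal{A}|}=Tw_2$, so $TW=\tfrac12\mathbf{1}_{|\mathcal{A}|}\mathbf{1}_{|\mathcal{Z}|}^{\top}$ is rank-one, giving $Z\perp A$ and $I(Z;A)=0$. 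Since $v$ is non-constant, $W$ is not a constant channel, so this is a genuine (non-trivial) erasure function.

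The step I expect to be the main obstacle is the PIC-to-null-space translation in case (a): one must be careful about the indexing convention for the ``smallest'' PIC $\lambda_d$ (whether $d=\min(|\mathcal{A}|,|\mathcal{X}|)$ or $d=\min(|\mathcal{A}|,|\mathcal{X}|)-1$) so that $\lambda_d=0$ actually forces rank-deficiency relative to $|\mathcal{X}|$, and one must justify the diagonal rescaling step (needing all marginals to be strictly positive on the respective supports, which follows from how $\mathcal{X}$ and $\mathcal{A}$ are defined). Once this correspondence is in place, case (b) reduces to a direct rank count and the final lifting is purely mechanical.
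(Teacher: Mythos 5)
Your proof is correct, and it takes a genuinely different route from the paper. The paper's proof is a one-line citation: it restates the \emph{Optimal Erasure-Utility Coefficient} bound of \citet{du2017principal} (Appendix~\ref{sec:PIC}, Lemma~\ref{lem:erasure-coeff}), which directly asserts $v^* \le \lambda_d(A,X)$ when $|\mathcal{X}|\le|\mathcal{A}|$ and $v^*\le 0$ otherwise, and then concludes $v^*=0$ in either case (a) or (b). You instead give a self-contained, constructive argument: you reduce $I(Z;A)=0$ to finding a non-constant vector in $\ker T$ (correctly using $T\mathbf{1}\neq\vzero$ to rule out the constant direction), obtain such a vector by rank counting in case (b) and by the PIC-singular-value correspondence $T=\mathrm{diag}(P_A)^{-1/2}\,Q\,\mathrm{diag}(P_X)^{1/2}$ in case (a), and then lift it to a genuine two-output stochastic channel $W=[\tfrac12\mathbf{1}+\varepsilon v\;\;\tfrac12\mathbf{1}-\varepsilon v]$. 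Your construction has the advantage of actually exhibiting a witness channel with $I(Z;A)=0$ and $I(Z;X)>0$, rather than inferring existence from an infimum of a ratio (which has a $0/0$ wrinkle at the trivial constant channel that the imported coefficient result quietly sidesteps); the paper's route, by contrast, inherits the sharper quantitative statement $v^*\le\lambda_d$ for free. Two cosmetic notes: in case (b), the bound $\mathrm{rank}(T)\le|\mathcal{A}|$ holds for any $|\mathcal{A}|\times|\mathcal{X}|$ matrix, so invoking~\ref{item:A4} there is unnecessary (though under~\ref{item:A4} you in fact get $\mathrm{rank}(T)=|\mathcal{A}|$ exactly); and the indexing convention you flagged does work out, since the paper sets $d=\min\{|\mathcal{A}|,|\mathcal{X}|\}-1$ and $\sqrt{\lambda_k}$ is the $(k+1)$-st singular value, so $\lambda_d=0$ does give $\mathrm{rank}(Q)\le d<\min\{|\mathcal{A}|,|\mathcal{X}|\}$ as you need.
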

This lemma states that perfect erasure is feasible when either the smallest PIC is zero, $\lambda_d(A, X)=0$, or $\mathcal{|X|} > \mathcal{|A|}$ (proof in Appendix~\ref{sec:PIC}). Throughout the paper, we assume~\ref{item:A5}: $\mathcal{|X|} > \mathcal{|A|}$, which means perfect erasure is always feasible irrespective of $\lambda_d(A, X)$.   This is a reasonable assumption in practice as the size of the representation space is typically much larger than the number of concepts.

Note that Definition~\ref{def:pce} for perfect concept erasure only requires the concept to be erased from learned representations, $I(Z; A)=0$. It does not impose any constraints on the utility of the learned representations, $I(Z; X)$. 
Since concept erasure also involves retaining maximum utility, it is natural to question the maximum achievable utility $I(Z; X)$ during perfect erasure. 
\begin{restatable}[Perfect Erasure Bound]{lemma}{miresult}
Under the assumptions~\ref{item:A1}-\ref{item:A5}, if an erasure function $f: \mathcal{X} \rightarrow \mathcal{Z}$ achieves perfect concept erasure $I(Z; A)=0$, then the utility $I(Z; X)$ is bounded as:
\begin{equation}
    I(Z; X) = I(Z; X|A) \leq H(X|A),\nonumber
\end{equation}
where the equality is satisfied when $H(X|Z, A)=0$. 
\label{lem:mi_result}
\end{restatable}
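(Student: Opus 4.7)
The plan is to prove the equality $I(Z;X) = I(Z;X|A)$ by two applications of the chain rule for mutual information, and then obtain the bound by writing $I(Z;X|A)$ in terms of conditional entropies and using nonnegativity of entropy.

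First, I would expand $I(Z; X, A)$ two ways using the chain rule:
\begin{equation}
I(Z; X, A) = I(Z; X) + I(Z; A \mid X) = I(Z; A) + I(Z; X \mid A). \nonumber
\end{equation}
By assumption~\ref{item:A1}, the Markov chain $A \to X \to Z$ holds, so $I(Z; A \mid X) = 0$. By the hypothesis of perfect concept erasure, $I(Z; A) = 0$. Substituting both into the chain-rule identity immediately yields $I(Z; X) = I(Z; X \mid A)$, which is the first claim.

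Next, I would rewrite the conditional mutual information using conditional entropy:
\begin{equation}
I(Z; X \mid A) = H(X \mid A) - H(X \mid Z, A). \nonumber
\end{equation}
Since $H(X \mid Z, A) \geq 0$ (both $X$ and the conditioning variables are discrete by assumptions~\ref{item:A2}--\ref{item:A3}, so standard nonnegativity of Shannon entropy applies), we conclude that $I(Z; X \mid A) \leq H(X \mid A)$, with equality if and only if $H(X \mid Z, A) = 0$, i.e., $X$ is a deterministic function of $(Z, A)$.

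This proof should be essentially mechanical — no obstacle is expected, since every step is a direct application of standard identities. The only subtlety worth flagging in the writeup is that assumption~\ref{item:A1} is used to kill $I(Z; A \mid X)$ (not $I(Z;A)$, which is zero by hypothesis), and that the equality condition $H(X \mid Z, A) = 0$ has a natural interpretation: the concept $A$ together with the erased representation $Z$ must be rich enough to recover $X$ deterministically. This observation also motivates why the bound $H(X \mid A)$ is the right ceiling: once $A$ is perfectly erased from $Z$, no information in $Z$ can distinguish $X$'s that are indexed purely by $A$, so only the within-group variability $H(X \mid A)$ remains available for utility.
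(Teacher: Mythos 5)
Your proof is correct and takes essentially the same approach as the paper: both apply the chain rule for mutual information to obtain $I(Z;X)=I(Z;X\mid A)$ once $I(Z;A)=0$ (perfect erasure) and $I(Z;A\mid X)=0$ (Markov assumption \ref{item:A1}) are used, then bound $I(Z;X\mid A)=H(X\mid A)-H(X\mid Z,A)\leq H(X\mid A)$ with equality iff $H(X\mid Z,A)=0$. Your double expansion of $I(Z;X,A)$ is a slightly cleaner packaging of the identity the paper derives by hand, but the decomposition and the use of the two hypotheses are identical.
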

The above result presents the outer bound for the utility, $I(Z; X)$,  during perfect concept erasure. The complete proof is in Appendix \ref{sec:lem1_proof}. 
Next, we focus on a slightly different setting of concept erasure where the privacy leakage about the concept can be non-zero, $I(Z; A) = \epsilon$  ($\epsilon > 0$). 
Specifically, we are interested in understanding the best achievable privacy (minimum $I(Z; A)$) for any utility $u$, such that $I(Z; X) \geq u$. To address this question, we present the notion of an \textit{erasure funnel}\footnote{This is often termed as privacy funnel, but we will refer to it as erasure funnel in the context of concept erasure.} introduced in information theory literature.
\begin{definition}[Erasure Funnel~\citep{du2017principal}]
For $0 \leq u \leq H(X)$, representations $Z = f(X)$, and a fixed joint distribution over $\mathcal{A} \times \mathcal{X}$, we define the \textit{erasure funnel} function as:
\begin{equation}
    \epsilon(u) = \inf_f \{I(Z; A)| I(Z; X) \geq u, A \rightarrow X \xrightarrow{f} Z\}.
    \label{eqn:eu}
\end{equation}
\label{def:ef}
\end{definition}
\vspace{-15pt}
We use the erasure funnel to define the setting where concept leakage is non-zero, $\epsilon(u) > 0$.
We term this setting \textit{relaxed concept erasure}.
\begin{definition}[Relaxed Concept Erasure]
    An erasure function $f$ performs \textit{relaxed concept erasure} for $\epsilon(u) > 0$, if the utility is $I(Z; X) \geq u$ and $I(Z; A) = \epsilon(u)$, where $\epsilon(u) = \inf_f \{I(Z; A)| I(Z; X) \geq u, A \rightarrow X \xrightarrow{f} Z\}$.
    \label{def:relaxed}
\end{definition}
\vspace{-5pt}
In Definition~\ref{def:relaxed}, $\epsilon(u)$ is the minimum achievable privacy leakage, $I(Z; A)$, such that the utility is at least $I(Z; X) \geq u$. This definition considers the utility, $I(Z; X)$,  and focuses on finding an erasure function $f$ that does not erase all information. This is useful in applications such as fairness, where it is not necessary to completely erase concepts to achieve fair outcomes.

\subsection{Outer Bounds For Concept Erasure}
We build on the definitions introduced in the previous section and present the outer bounds for privacy. Ideally, we want to \textit{minimize} privacy leakage, $I(Z; A)$ while \textit{maximizing} utility, $I(Z; X)$.  Here, we utilize the erasure funnel (Def.~\ref{def:ef}) to answer these questions. The erasure funnel extends beyond relaxed erasure and can offer insights into perfect erasure ($\epsilon(u) = 0$).  
We will use $\epsilon(u)$ to answer the following research questions: 
\begin{itemize}[topsep=0pt, leftmargin=11mm, noitemsep]
    \itemsep0.25mm
    \item[(\textbf{RQ1})] {What is the maximum utility  $u$ during perfect concept erasure (when $\epsilon=0$)?} 
    \item[(\textbf{RQ2})] {What is the outer bound of privacy $\epsilon(u)$ (minimum $\epsilon(u)$) during relaxed concept erasure?} 
\end{itemize}
\begin{figure}[t!]
    \centering
    \includegraphics[width=0.48\textwidth, keepaspectratio]{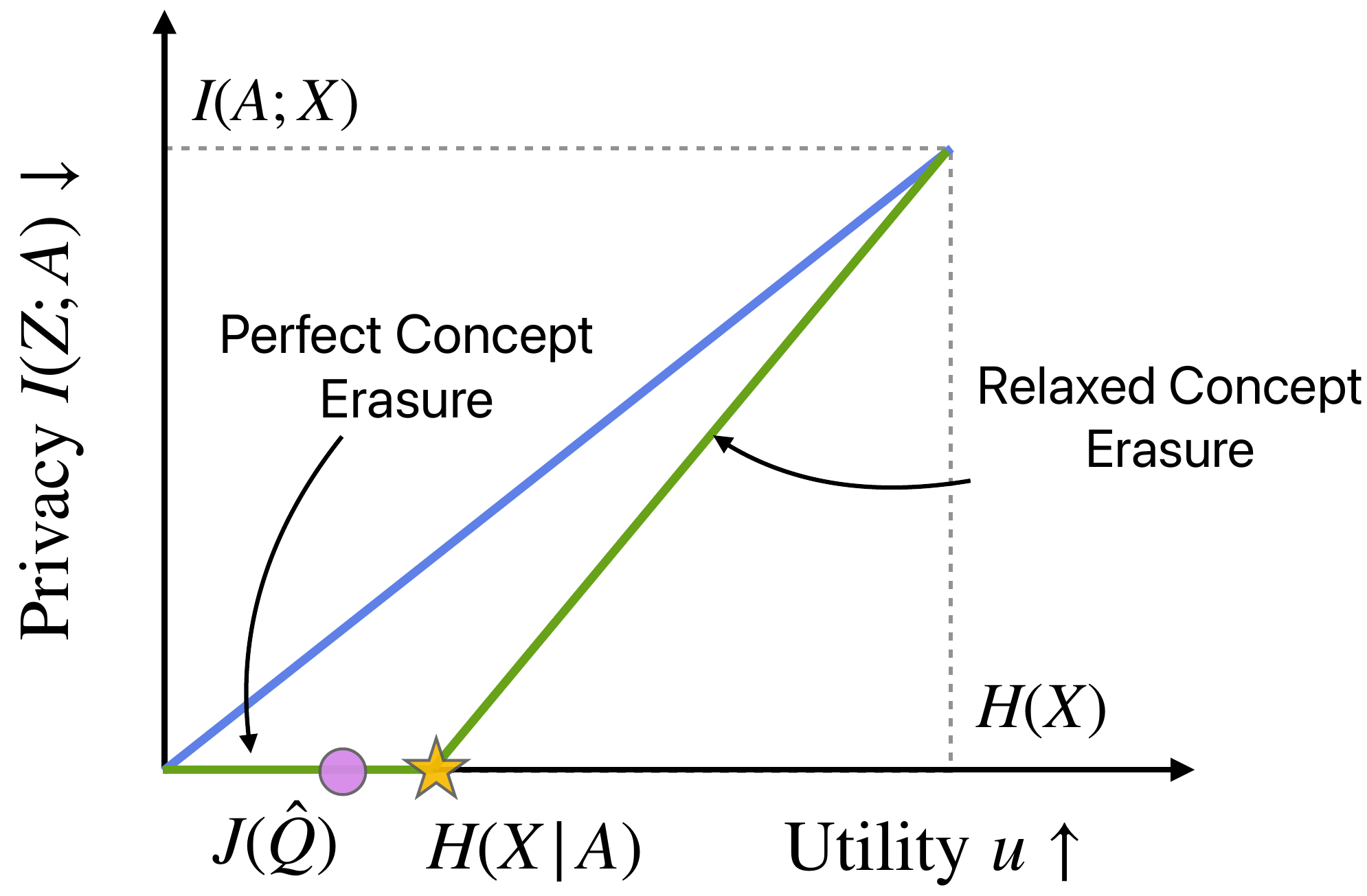}
    \vspace{-10pt}
    \caption{Schematic diagram of the privacy-utility tradeoff and illustration of the bounds of erasure funnel function, $\epsilon(u)$~\citep{du2017principal}. The region shown by the \textcolor{ao}{\textbf{green}} line is desirable for concept erasure as it is the minimum $I(Z; A)$ for a given utility $I(Z; X) \geq u$. 
    } 
    \label{fig:funnel}
    \vspace{-10pt}
\end{figure}
As obtaining the precise analytic expression for $\epsilon(u)$ is difficult, we present its bounds~\citep{du2017principal} to answer the above research questions.
\begin{restatable}[Outer Bounds for $\epsilon(u)$~\citep{du2017principal}]{lemma}{funnel}
For $0 \leq u \leq H(X)$, erasure funnel function, $\epsilon(u)$, is bounded as:
\begin{equation}
    \max\left\{0, u - H(X|A)\right\} \leq \epsilon(u) \leq \frac{uI(A; X)}{H(X)}. \nonumber
\end{equation}
\label{lem:funnel}
\end{restatable}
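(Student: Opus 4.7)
The plan is to prove the two bounds separately: the lower bound follows from the Markov structure and data processing considerations, while the upper bound is achieved by an explicit time sharing construction. Both are standard tools from the privacy funnel literature.

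For the lower bound, I would first observe that $\epsilon(u)\geq 0$ trivially since mutual information is non-negative. For the other half, I would use assumption \ref{item:A1}, which gives the Markov chain $A\to X\to Z$ and therefore $I(Z;A\mid X)=0$. The chain rule applied two different ways then yields
\begin{equation}
I(Z;X) \;=\; I(Z;X,A) \;=\; I(Z;A)+I(Z;X\mid A),\nonumber
\end{equation}
and since $I(Z;X\mid A)\leq H(X\mid A)$, any feasible $f$ (for which $I(Z;X)\geq u$) must satisfy $I(Z;A)\geq u-H(X\mid A)$. Taking the infimum over feasible $f$ and combining with the trivial bound gives $\epsilon(u)\geq\max\{0,\,u-H(X\mid A)\}$.

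For the upper bound, the idea is to exhibit a randomized erasure function whose $(I(Z;X),I(Z;A))$ pair lies exactly on the line connecting the two ``extreme'' points: the identity map ($Z=X$, giving $(H(X),I(A;X))$) and the constant map (giving $(0,0)$). Concretely, let $U\in\{0,1\}$ be independent of $(A,X)$ with $\Pr(U=1)=p:=u/H(X)\in[0,1]$, and set $Z=X$ if $U=1$, and $Z=\star$ (a fresh symbol not in $\mathcal{X}$) if $U=0$. The Markov chain $A\to X\to Z$ is preserved since $U\perp(A,X)$. Because $U$ is a deterministic function of $Z$ (the value $\star$ reveals $U=0$, any other value reveals $U=1$), I can compute mutual informations by conditioning on $U$:
\begin{equation}
I(Z;X) \;=\; p\,I(X;X) + (1-p)\cdot 0 \;=\; p\,H(X) \;=\; u,\nonumber
\end{equation}
\begin{equation}
I(Z;A) \;=\; p\,I(X;A) + (1-p)\cdot 0 \;=\; \frac{u\,I(A;X)}{H(X)}.\nonumber
\end{equation}
Since this $Z$ meets the constraint $I(Z;X)\geq u$, the infimum in Definition~\ref{def:ef} is at most $u\,I(A;X)/H(X)$.

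The main obstacle is justifying the use of a randomized map in the upper bound construction, since the paper writes $Z=f(X)$ as if $f$ were deterministic. The standard resolution, which I would make explicit, is to view $f$ as having access to an independent randomization seed $U$, so that $Z=f(X,U)$; this preserves the Markov chain $A\to X\to Z$ and is the convention under which the erasure funnel results of \citet{du2017principal} are stated. A secondary subtlety is verifying that enlarging the alphabet by adding the symbol $\star$ does not conflict with assumption \ref{item:A2}: finiteness of $\mathcal{Z}$ is preserved, so the construction remains within the stated model.
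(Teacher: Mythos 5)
Your proof is correct. The paper itself does not prove this lemma — it is imported verbatim from \citet{du2017principal} — so there is no in-paper argument to compare against, but your derivation is the standard one and both halves are sound. The lower-bound step is in fact the same mutual-information chain-rule identity the authors use in Appendix~\ref{sec:lem1_proof} to prove Lemma~\ref{lem:mi_result}: the Markov chain~\ref{item:A1} kills $I(Z;A\mid X)$, giving $I(Z;X)=I(Z;A)+I(Z;X\mid A)$, and $I(Z;X\mid A)\leq H(X\mid A)$ finishes it (and indeed specializing this to $\epsilon(u)=0$ recovers Lemma~\ref{lem:mi_result}, which is exactly why the paper's {\openbigstar[0.8]} sits at $u=H(X\mid A)$). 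Your upper-bound construction — an erasure channel that passes $X$ through with probability $p=u/H(X)$ and otherwise emits an erasure symbol $\star$ — is the standard time-sharing witness; the key points you correctly flag are that $U\perp(A,X)$ preserves the Markov chain, that $U$ is recoverable from $Z$ so both mutual informations decompose as $p\,I(X;\cdot)+(1-p)\cdot 0$, and that the common-randomness convention for $f$ is the one under which Definition~\ref{def:ef} and the cited result are stated (the paper implicitly uses stochastic $f$ as well, e.g.\ in its own construction $f(x)\sim P(Z\mid X=x)$ and Algorithm~\ref{alg:erasure}). One could alternatively obtain the upper bound from convexity of $\epsilon(\cdot)$ plus the two endpoints $(0,0)$ and $(H(X),I(A;X))$, but your explicit channel is more self-contained. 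No gaps.
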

\vspace{-15pt}
In Figure~\ref{fig:funnel}, we plot the result of Lemma~\ref{lem:funnel}, where the $y$-axis indicates the privacy 
$I(Z; A)$, and the $x$-axis indicates the utility, $u$.
We observe that $\epsilon(u)$ lies within a funnel-like structure with upper bound shown in \textcolor{royalblue}{\textbf{blue}} and lower bound in \textcolor{ao}{\textbf{green}}. 
In this plot, perfect concept erasure is the flat portion of the \textcolor{ao}{\textbf{green}} line where $\epsilon(u) = 0$. 
During perfect erasure, we observe that the maximum possible utility is $H(X|A)$ shown by {\openbigstar[0.8]}. This is the same result we derived in Lemma~\ref{lem:mi_result} and provides the solution for (\textbf{RQ1}). During relaxed concept erasure $\epsilon(u) > 0$, the minimum mutual information is $\epsilon(u) = \frac{uI(A; X)}{H(X)}$, which is a linear function of the utility, $u$. This provides the solution for (\textbf{RQ2}). 
We will discuss the importance of {\purpcirc} in Section~\ref{sec:erase_func}.

We would like to emphasize that these outer bounds might not be achievable for any random variables  
$X$ and $A$. Therefore, our next focus will be on understanding the data constraints under which the outer bounds can be attained and identifying the erasure functions that achieve them. In this work, we only consider perfect concept erasure and its corresponding erasure functions. We defer the derivation of erasure functions that achieve relaxed concept erasure to future works.

\begin{figure}[t!]
    \centering
    \includegraphics[width=0.41\textwidth]{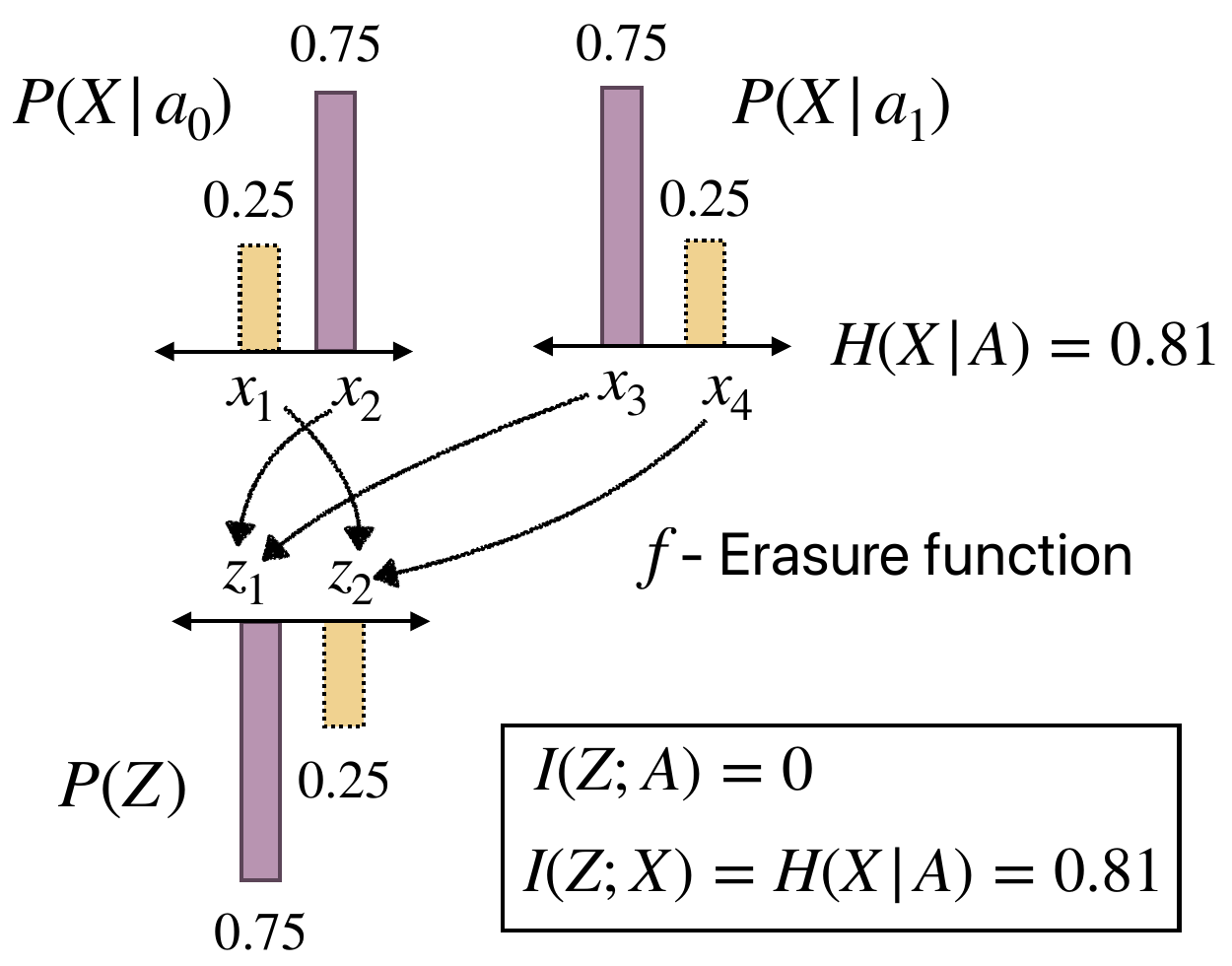}
    \vspace{-10pt}
    \caption{{An illustration of a perfect erasure function. The input representations of each group are sampled from 2D distributions. The erasure function $f$ in the figure achieves the outer bounds for concept erasure.}}
    \vspace{-10pt}
    \label{fig:toy}
    \vspace{-5pt}
\end{figure}

\section{PERFECT ERASURE FUNCTIONS (\X)}
\label{sec:erase_func}

In this section, we present
an algorithm for concept erasure, which is inspired by achieving perfect concept erasure, $I(Z; A) = 0$, along with the utility outer bound, $I(Z; X)=H(X|A)$ ({\openbigstar[0.8]} in Fig.~\ref{fig:funnel}). We will refer to these methods as \textit{perfect erasure functions}.

To motivate our method, we consider a simple example in Fig.~\ref{fig:toy}. 
We consider a scenario with two concept groups, $\mathcal{A} = \{a_0, a_1\}$ with equal prior $p(a_0) = p(a_1) = 0.5$. We consider the data representation space $\mathcal{X}$ to be $\{x_1,x_2,x_3,x_4\}$. Each group has a two dimensional representation -- $P(X|A=a_0)$ has support of $\{x_1, x_2\}$ and $P(X|A=a_1)$ has a support of $\{x_3,x_4\}$.

In this example, there is an erasure function, $f$, which maps the data $X$ to the erased representations, $Z$, which makes use of a helpful observation -- $P(X|A=a_0)$ and $P(X|A=a_1)$ are permutations of one another. This erasure function meets the definition of a perfect erasure function.
We hypothesize that the fact that the conditional distributions are permutations is essential to the achievement of perfect erasure. This permutation is essential as we can have a bijective map that transforms $X$ into $Z$ without revealing $A$.
We formalize this in the following class of data.

\begin{restatable}[Data Constraints]{definition}{datacond} We consider 
the class of data for which the distributions of different groups are permutations of each other:\\[-1em]
\begin{equation*}
     \forall (i, j), \text{ (a) }  |\mathcal{X}_i| = |\mathcal{X}_j|,\text{ (b) } \forall x \in \mathcal{X}_i,\; P_i(x)=P_j(\sigma_{ij}(x)),
\end{equation*}
where $P_i=P(X|A=a_i)$, and for which there exists an erasure function $f$ that achieves $I(Z; A) = 0$ and $I(X; Z) = H(X|A)$.
\label{thm:data-cond}
\end{restatable}
The above definition states that the distributions must be \textit{equal up to permutation} and have equal support sizes ($\mathcal{X}_i=\text{supp}(P_i)$). We provide the justification behind the choice of these data constraints in Appendix~\ref{sec:data-cond}.
Next, we turn to answer the question of how to find such an erasure function $f$ to achieve this definition. 

\textbf{Equal Distributions} (\textit{up to permutations}). 
In the setting, where the distributions $P_i$'s are equal up to permutation, we present the erasure functions. Intuitively, the erasure function is a one-to-one map from each distribution, $P_i$, to a common distribution, $Q=P(Z)$ (as shown in Figure~\ref{fig:toy}). The common distribution $Q$ can be any permutation of the $P_i$'s. We formalize this in the following method.
\begin{restatable}[Perfect Erasure Function]{method}{equalpef} 
    If the data constraints in Definition~\ref{thm:data-cond} hold, then define   erasure function $f$ as:\\[-1em]
    \begin{equation}
        f(x) = \{\sigma_i(x) |\; x \in \mathcal{X}_i, \; \sigma_i \in \Sigma_i\},
        \label{eqn:pef}
    \end{equation}
    where $\sigma_i$ is any bijective function that transforms $P_i$ into $Q$ defined by the set, $\Sigma_i$ below:\\[-0.5em]
    \begin{equation}
        \Sigma_i = \left\{\sigma: \mathcal{X}_i \rightarrow \mathcal{Z}| \forall x \in \mathcal{X}_i,\; P_i(x) = Q(\sigma(x))\right\}.
        \label{eqn:perm_set}
    \end{equation}
    \label{lem:equal-pef}%
\end{restatable}
\vspace{-15pt}
The above method shows that a piecewise bijective mapping different concept groups, $\mathcal{X}_i$, to a common output support, $\mathcal{Z}$, achieves the outer bounds for perfect erasure (Definition~\ref{thm:data-cond}). The justification is presented in Appendix~\ref{sec:equal-pef}. Since the choice of $Q$ and $\sigma_i$'s are not unique, there can be multiple formulations of the erasure function. In the simple scenario, where all $P_i$'s are uniform distributions, then $f$ can be any random bijective map between $\mathcal{X}_i$ and $\mathcal{Z}$.

\textbf{Unequal distributions}. We consider the scenario where the constraints of Definition~\ref{thm:data-cond} are violated, i.e., the distributions, $\{P_i\}$'s, are not equal (even after permutation). In this setting, achieving the outer bounds mentioned in Definition~\ref{thm:data-cond} is not possible. However, it is still possible to achieve perfect concept erasure, $I(Z; A)=0$. Operating with ${P}(Z) = {P}(Z|A=a_i)$, we want to find the piecewise map $f_i: \mathcal{X}_i \rightarrow \mathcal{Z}$ such that the distribution of the common support is fixed and equal to the conditional distribution, $Q = {P}(Z) = {P}(Z|A=a_i)$. Collectively the piecewise components $f_i$ define the erasure function $f$.
Specifically, we seek mapping functions $f_i$'s that maximize the utility, $I(Z; X)$, as shown below:
\begin{align}
    \max\; &I(Z; X) = \max I(Z; X|A) \\
    &= \max \sum_{i} p(a_i)I(Z; X|A=a_i)
    \nonumber\\
    &= \max \sum_{i} p(a_i)\left[H(Q) + H(P_i) - H_{\Gamma_i}(Q, P_i)\right]\nonumber\\
    &= \max \left[H(Q)  - \sum_{i} p(a_i)  H_{\Gamma_i}(Q, P_i)\right]. \label{eqn:obj}
\end{align}
The optimization of the above objective depends on two key sets of parameters: (i) the distribution of the common support, $Q$, and (ii) $\Gamma_i$'s -- coupling joint distributions between pairwise distributions $(P_i, Q)$. The second term in the optimization (Eq.~\ref{eqn:obj}) is equivalent to estimating the minimum entropy coupling, $H_{\Gamma_i}(P_i, Q)$, which can be obtained efficiently using the greedy algorithm (Section~\ref{sec:fce}) if we know $Q$. Since $Q$ is unknown, we can simplify the optimization problem as:
\begin{align}
    Q^* &= \argmax_Q J(Q) \nonumber\\
    &= \argmax_Q \left[H(Q) - \sum_i p(a_i)H_{\mathrm{min}}(P_i, Q)\right].
    \label{eqn:q-obj}
\end{align}
\begin{figure}[t!]
    \centering
    \includegraphics[width=0.49\textwidth, keepaspectratio]{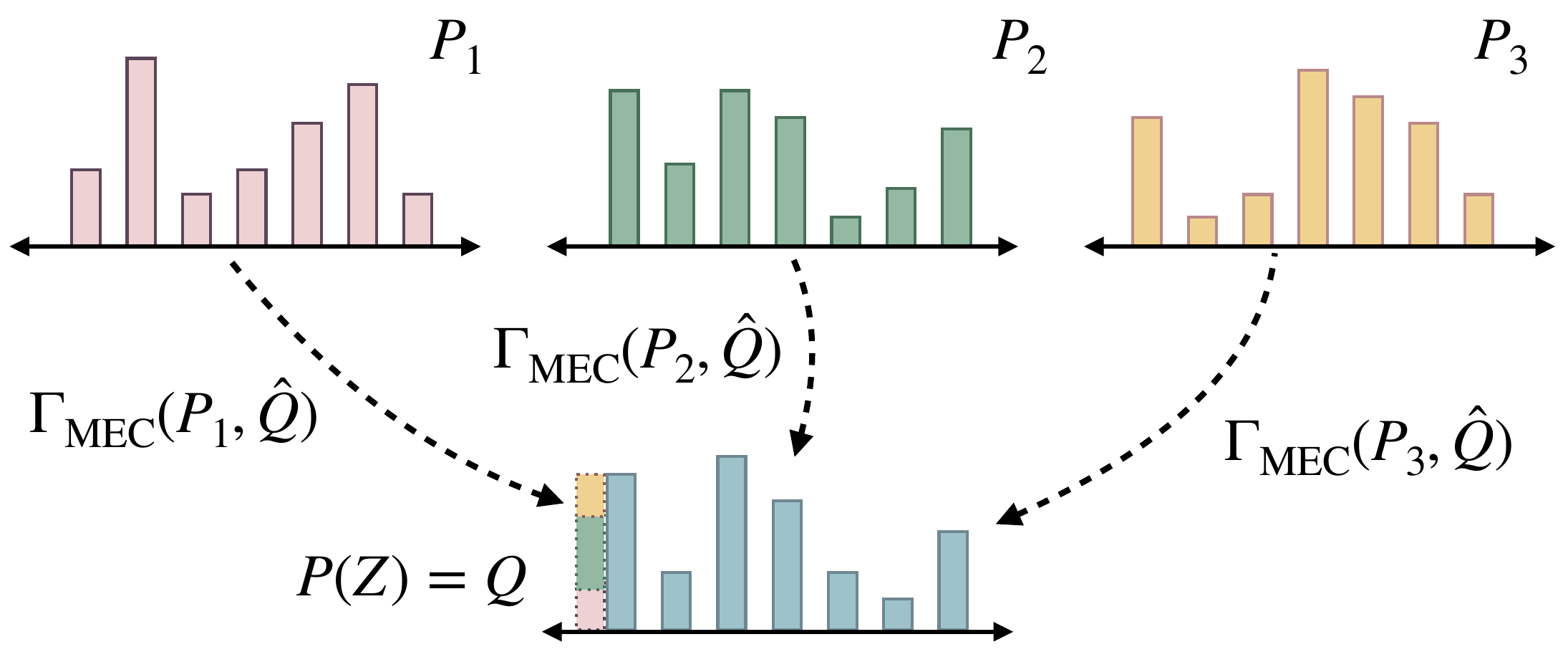}
    \vspace{-10pt}
    \caption{
    Schematic diagram of the proposed concept erasure process for unequal distributions. Given distributions from different concept groups, $\{P_1, P_2, P_3\}$, we map these representations to a common distribution, $Q$. Each distribution $P_i$ is mapped to $Q$ using the minimal entropy coupling map, $\Gamma_{\mathrm{MEC}}(P_i, Q)$. 
    }
    \label{fig:erasure}
\end{figure}
Solving the above objective is difficult and cannot be optimized using standard techniques like gradient descent as we do not know the analytical form of the minimum entropy coupling function, $H_{\mathrm{min}}(\cdot, P_i)$. Even evaluation of the $H_{\mathrm{min}}(\cdot, P_i)$ function is expensive as MEC estimation is NP-Hard and approximating it still requires running the greedy algorithm. Therefore, we resort to Bayesian Optimization (BO)~\citep{frazier2018tutorial}, which performs optimization by making a small number of queries to the objective $J(Q)$ (Eq.~\ref{eqn:q-obj}). Bayesian Optimization utilizes a surrogate model (usually a Gaussian process~\citep{seeger2004gaussian}) to fit the observed query points.
Using an acquisition function, BO explores different areas of the input space to maximize the objective. We will refer to the solution returned by Bayesian Optimization as $\hat{Q}_{\mathrm{BO}}$; however, it is not guaranteed to be the optimal solution. 

We hypothesize that the set of input distributions $\Bar{P}$ are a good candidate for output distribution $Q$ as they are stationary points for the objective $J(Q)$. The best achievable utility, $I(Z; X)$, when  $Q \in \overline{P}$ is shown by the {\purpcirc} point in Figure~\ref{fig:funnel}. 
The set $\bar P$ only represents the set of stationary points and it may be still possible for the global maxima, $Q^*$ to improve upon them. However, when distributions are unequal analytically deriving the global optima $Q^* \notin \Bar{P}$  is non-trivial and remains an open question.\footnote{Further discussion of optimization in Appendix~\ref{appdx:opt}.}

To summarize, when underlying distributions $P_i$'s are unequal, we select either by using the stationary points or the solution of Bayesian optimization: $\hat{Q} = \argmax_{Q \in \Bar{P}\cup \hat{Q}_{\mathrm{BO}}} J(Q)$. In practice, we observe that the stationary points $\Bar{P}$ often achieve better performance than the BO solution, $\hat{Q}_{\mathrm{BO}}$.
Once, we have selected the support distribution $\hat Q$, the erasure function is a stochastic map, $f(x) \sim P(Z|X=x)$, that samples the output $z$ from the conditional distribution:\\[-0.5em]
\begin{equation*}
    P(Z|X=x) = \left\{\Gamma_i(x)| \text{ } x \in \mathcal{X}_i, \Gamma_i = \argmin_\Gamma H_\Gamma(P_i, \hat{Q})\right\},
    \label{eq:erasure-func}
\end{equation*}
where $\Gamma_i$ is the coupling map corresponding to the minimum entropy coupling between distribution pairs $(P_i, \hat{Q})$. An illustration of this erasure function is shown in Figure~\ref{fig:erasure}. Using the result from Lemma~\ref{lem:mi_result}, it is easy to see that for unequal distributions the erasure function doesn't achieve the utility outer bound as $H(X|Z, A) > 0$ (as different $x \in \mathcal{X}_i$ can map to the same $z \in \mathcal{Z}$). 
Nonetheless, since the output distribution $Q$ is fixed, and MEC yields a valid joint distribution, it still ensures perfect concept erasure, $I(Z; X) = 0$. 
\begin{figure*}[t!]
    \centering
    \includegraphics[width=0.89\textwidth, keepaspectratio]{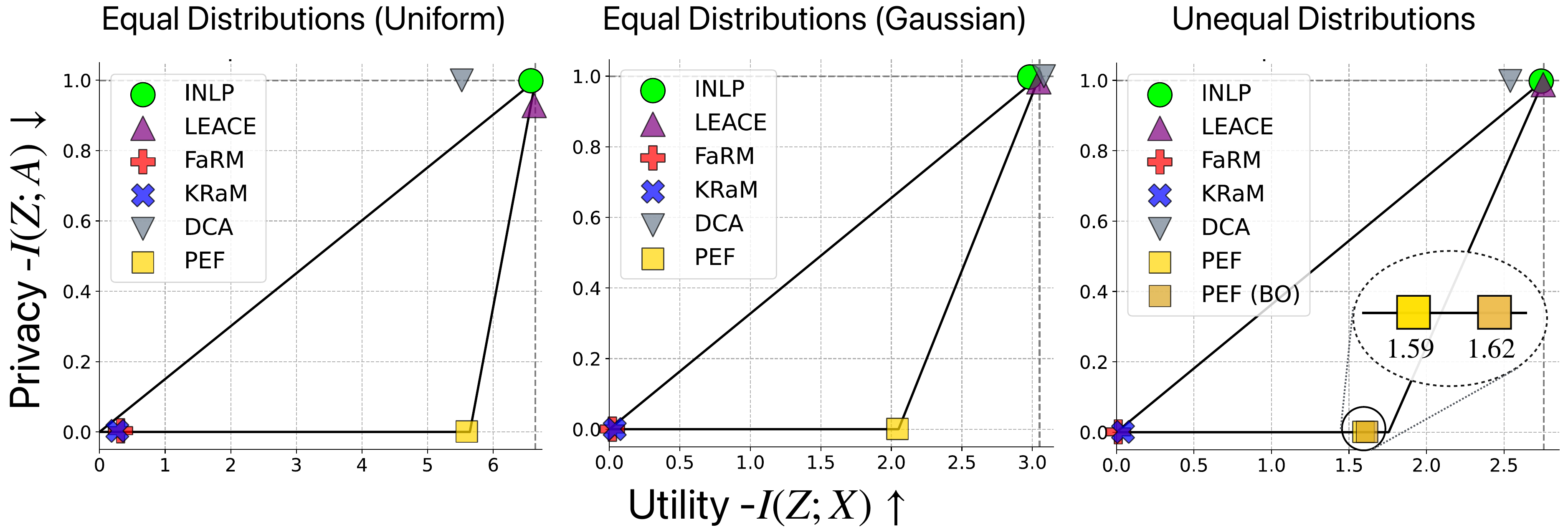}
    \vspace{-12pt}
    \caption{Privacy-Utility tradeoff plots for concept erasure using synthetic data when the underlying representations from different concept groups are sampled from equal uniform (\textit{left}), equal Gaussian (\textit{center}) and unequal (\textit{right}) distributions. We observe that {\X}  retains significant utility (high $I(Z; X)$) with perfect erasure, $I(Z; A) = 0$.}
    \label{fig:syn-results}
\end{figure*}
\begin{figure*}[t!]
    \centering
    \includegraphics[width=0.9\textwidth, keepaspectratio]{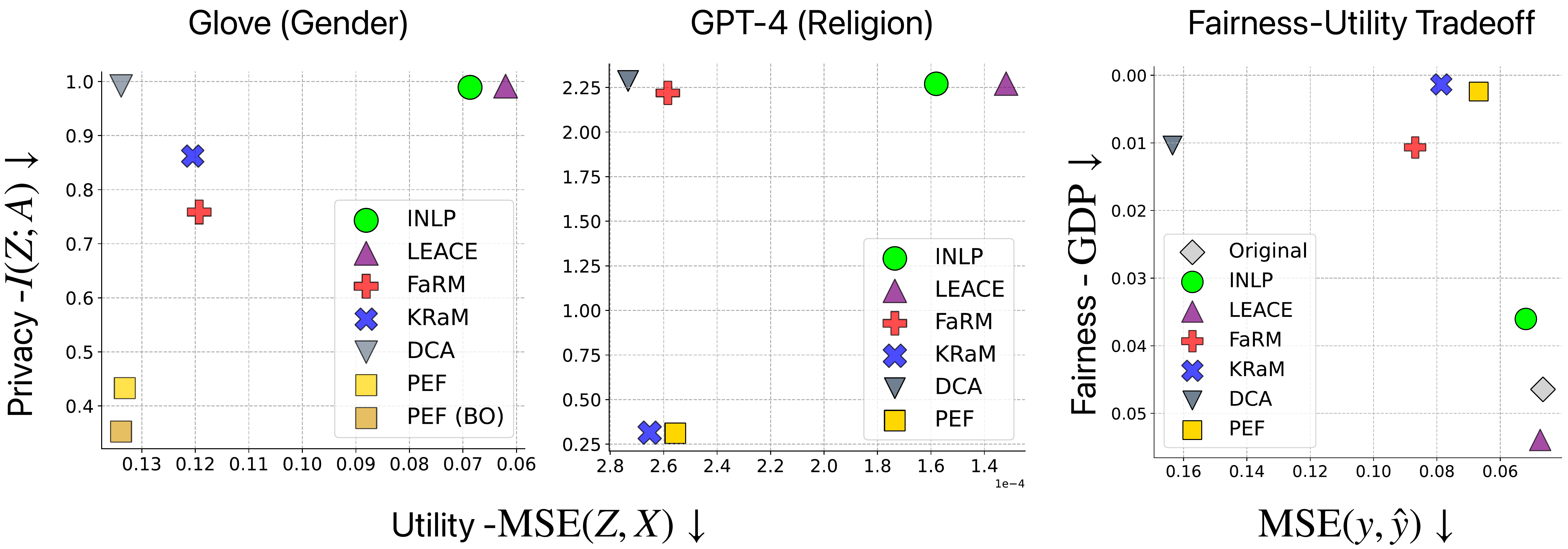}
    \vspace{-10pt}
    \caption{(\textit{Left}) We show the privacy-utility tradeoff for erasing gender information from GloVe embeddings and religion information from GPT-4 embeddings. We observe that {\X} achieves significant improvement in privacy compared to baseline approaches.  (\textit{Right}) We use the erased GPT-4 embeddings for toxicity classification and plot the fairness-utility tradeoff. We observe that {\X} achieves the best fairness-utility tradeoff among the baselines.}
    \label{fig:glove-gpt}
\end{figure*}

We summarize the overall concept erasure algorithm in Algorithm~\ref{alg:erasure}. The algorithm is provided with the entire representation set $\mathcal{S} = \{x_1, x_2, \ldots\}$ (where $x_i \in \mathcal{X}$) and the corresponding concept set, $\mathcal{G}$. First, it estimates the distribution of the representations from individual concept groups. Second, it checks whether the underlying distributions are equal up to permutation. Based on this step, the algorithm determines the optimal erasure function. Finally, the selected erasure function is used to generate the erased representations, $\mathcal{O}$.
\begin{algorithm}[h]
\caption{Concept Erasure Using {\X}}
\begin{algorithmic}[1]
    \State \textbf{Input}: Sample Set $\mathcal{S} = \{x_1, x_2, \ldots \}$, Concept Set $\mathcal{G} = \{a_1, a_2, \ldots\}$. 
    \State $\forall i, P_i = \mathrm{EstimateDistribution}(\mathcal{S}_i)$ \textcolor{gray}{// ($\mathcal{S}_i = \{x|x \in \mathcal{X}_i\}$ is the sample set of the $i$-th concept group)} 
    \If{$\forall (i, j), P_i = \sigma(P_j)$} 
    \State \textcolor{gray}{// Distributions are equal up to permutation}
    \State $f(x) = \{\sigma_i(x) |\; x \in \mathcal{X}_i, \; \sigma_i \in \Sigma_i\}$ \textcolor{gray}{// ($\Sigma_i$ is defined in Eq.~\ref{eqn:perm_set})}
    \State $\mathcal{O} = f(\mathcal{S})$ 
    \Else 
    \State \textcolor{gray}{// Unequal distributions}
    \State $\hat{Q} = \argmax_Q \left[H(Q) - \sum_i p(a_i)H_{\mathrm{min}}(P_i, Q)\right]$
    \State {$P(Z|X=x) =\left\{\Gamma_i(x)| \text{ } x \in \mathcal{X}_i, \Gamma_i = \argmin_\Gamma H_\Gamma(P_i, \hat{Q})\right\}$}
    \State $\mathcal{O} = \{z \sim P(Z|X=x)|\; x \in \mathcal{X}\}$ 
    \EndIf
    \State \Return $\mathcal{O}$ \textcolor{gray}{// erased representation set}
\end{algorithmic}
	\label{alg:erasure}
\end{algorithm}

\section{EXPERIMENTS}
\label{sec:exp}
\vspace{-5pt}
In this section, 
we discuss the experimental setup and evaluation results in detail. The implementation is available here: \href{https://github.com/brcsomnath/PEF}{https://github.com/brcsomnath/PEF}.

\textbf{Baselines}. We compare our approach, {\X}, with several state-of-the-art baselines, including linear concept erasure techniques -- INLP~\citep{inlp}, LEACE~\citep{belrose2024leace} and general erasure techniques -- FaRM~\citep{farm}, KRaM~\citep{kram}. We also compare with the state-of-the-art privacy funnel solver that directly optimizes mutual information terms using variational approximation, DCA~\citep{huang2024efficient}.

\textbf{Synthetic Data}.
We conduct experiments in controlled settings using synthetic data, which allows us to directly compare {\X}'s performance with the theoretical bounds (shown in Figure~\ref{fig:funnel}). Otherwise, accurately estimating mutual information (MI) using high-dimensional data is challenging.
Specifically, we consider a set of 3D representations sampled from a finite support set. The finite support set is also randomly sampled from a 3D uniform distribution. We use two underlying concepts and sample representations from distinct support for each group.  We experiment in two settings where the groups have  (i) \textit{equal} and (ii) \textit{unequal} support distributions to simulate the scenarios in Section~\ref{sec:erase_func}. Finally, we report the MI estimates using $\mathcal{V}$-information~\citep{xu2019theory}, which uses a classifier to predict the concept (privacy) and original representation (utility) from the representations post-concept erasure. Since the support of the input representation set is finite, we associate each sample representation with a categorical label. The classifier, used to compute the $\mathcal{V}$-information with the original representations, predicts the label given the erased representation. $\mathcal{V}$-information provides a strong lower bound for the MI estimate when the classifier is constrained by computational resources.

\begin{figure}[t!]
    \centering
    \includegraphics[width=0.48\textwidth, keepaspectratio]{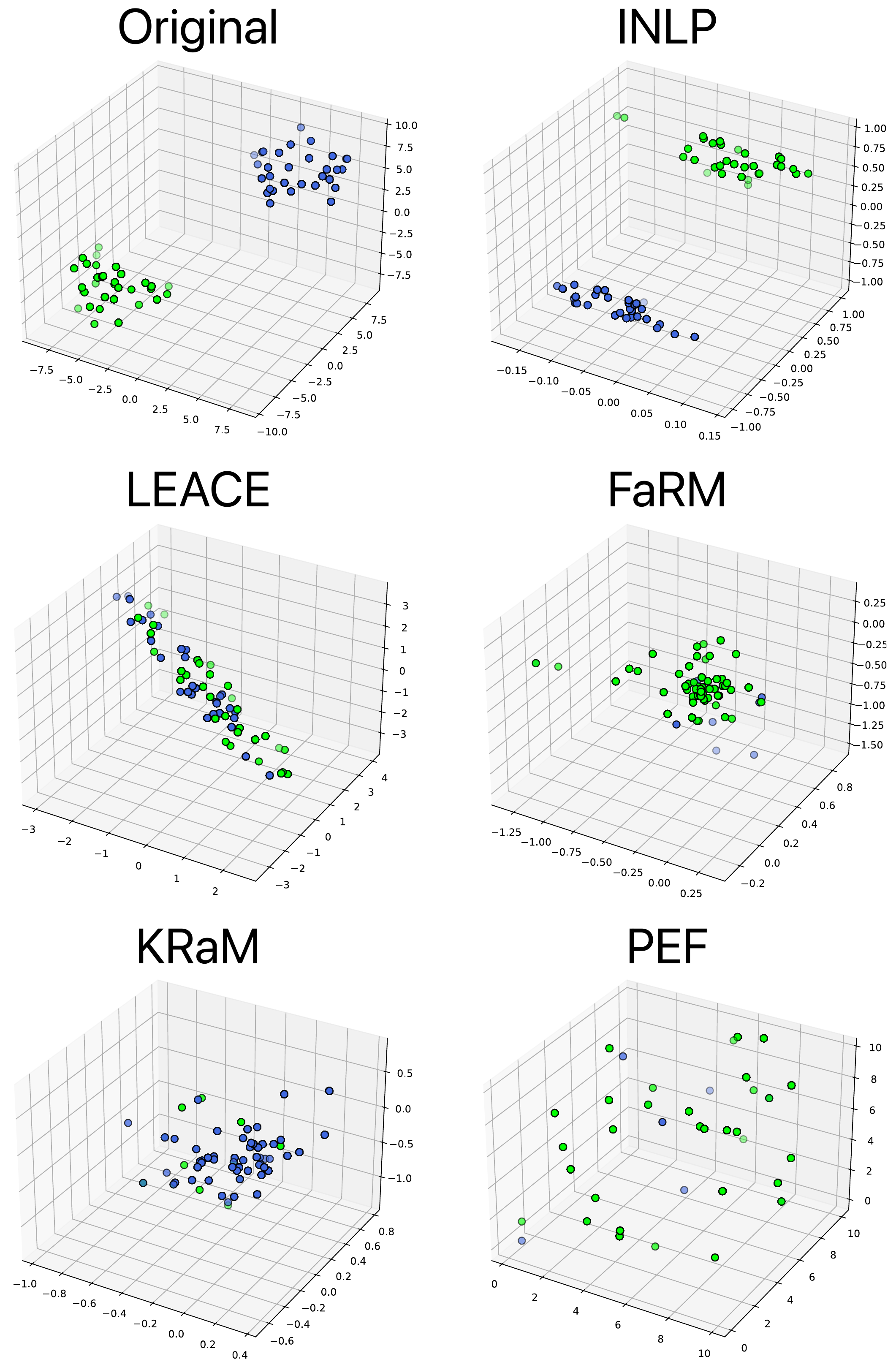}
    \vspace{-20pt}
    \caption{Visualization of the representations post concept erasure using different methods. We observe that existing techniques either leak concept information (INLP \& LEACE) or lose utility as representations collapse (FaRM \& KRaM). {\X} can erase concepts properly while maintaining utility.}
    \label{fig:erasure-vis}
\end{figure}

In Figure~\ref{fig:syn-results}, we report the privacy, $I(Z; A)$, and utility, $I(Z; X)$, tradeoff during concept erasure. The lines describing the funnel structure indicate the theoretical erasure funnel bounds, $\epsilon(u)$, as described in Figure~\ref{fig:funnel}. We observe that linear concept erasure approaches, INLP and LEACE, retain significant utility, achieving high $I(Z; X)$ but at the cost of poor privacy, high $I(Z; A)$. DCA performs similar to linear erasure approaches. On the other hand, nonlinear erasure approaches such as FaRM and KRaM erase the concept robustly, but result in a significant drop in utility, low $I(Z; X)$. In contrast to these approaches, {\X} achieves high utility while perfectly erasing the concept, $I(Z; A)=0$. When representations are sampled from equal distributions in Figure~\ref{fig:syn-results} (left {\& center}), we observe that {\X} achieves the utility outer bound. 
When the distributions are unequal, in Figure~\ref{fig:syn-results} (right), {\X} achieves perfect privacy but is unable to achieve the maximum utility, which is expected.  In this setting, the Bayesian optimization solution, {\X} (BO), slightly improves the utility while achieving perfect erasure.

\textbf{Real-World Datasets}. 
We evaluate our approach in real-world settings to erase gender from GloVe embeddings and religion from the Jigsaw~\citep{jigsaw} dataset's GPT-4 text embeddings.
As these representations are high-dimensional, computing the mutual information with the original representations is challenging. 
Therefore, we report the mean squared error (MSE) in predicting the original representations from the erased representations using a scikit-learn~\citep{pedregosa2011scikit} neural network regression model.

For Glove embeddings shown in Figure~\ref{fig:glove-gpt} (left), we observe that {\X} achieves significantly better privacy (low $I(Z; A)$) 
compared to all baselines while achieving similar utility to methods like FaRM, KRaM, and DCA. We also observe that {\X} (BO) slightly outperforms {\X} in terms of privacy. 
{\X} assumes access to the original probability distributions. In practice, we have finite high-dimensional samples and use kernel density estimation~\citep{chen2017tutorial} to estimate the distributions.  We hypothesize that the slight drop in utility is due to the unavailability of these original distributions.

In the second setting, we obtain GPT-4 (\texttt{text-embedding-3-large}) representations for online comments in the Jigsaw dataset. The objective of this experiment is to erase religion information and utilize the resultant representations for toxicity classification. This ensures that religion does not affect the predicted toxicity scores. We report the results in Figure~\ref{fig:glove-gpt} (center). Similar to the previous setting, we observe significant privacy improvements with similar utility to non-linear erasure techniques (FaRM, KRaM \& DCA). In this setting, we found that the local minima $\{P_i\}$' provided a better solution than $\hat{Q}_{\mathrm{BO}}$, therefore, the performance {\X} and {\X} (BO) is the same. This is expected as Bayesian optimization is not known to be effective in high dimensions (3072).  

After concept erasure, we use the resultant representations to perform toxicity classification. We report the fairness-utility tradeoff, where fairness is computed using generalized demographic parity~\citep{gdp} (for continuous-valued toxicity scores) and utility is the MSE loss of the predictions. In Figure~\ref{fig:glove-gpt} (right), we observe that {\X} significantly improves fairness GDP scores while having minimal impact on performance (minimal increase in MSE).
In general, {\X} achieves a good balance between utility and fairness compared to the baselines. We provide more details in Appendix~\ref{sec:addl_exp}.

\noindent\textbf{Visualization}. In this section, we visualize the representations obtained from various methods, as shown in Figure~\ref{fig:erasure-vis}. We use synthetic 3D representations, which encode a binary concept shown by representations in blue and green. Post concept erasure, we expect representations from both groups to be indistinguishable. For linear erasure techniques (INLP \& LEACE), we observe that the underlying concept can be identified, while non-linear techniques (FaRM \& KRaM) achieve better overlap but collapse into a small region of space, signifying a utility loss. In contrast to these approaches, {\X} shows robust erasure while retaining utility (no collapse observed). We further illustrate the erasure process of LEACE and {\X} with a simpler example in Appendix~\ref{sec:leace-vis}.

\section{CONCLUSION}
In this paper, we study the fundamental limits of perfect concept erasure. Theoretically, we study the maximum amount of information that can be retained while perfectly erasing a concept. Furthermore, we explore the conditions that the underlying data distribution and the analytical form of the erasure function required to achieve perfect erasure. Empirically, we demonstrate the effectiveness of the proposed perfect erasure function (\X) in various synthetic and real-world settings. Although {\X} can guarantee perfect erasure, it assumes access to a finite support representation distribution, which may be difficult to estimate in low-resource settings. Future works can focus on effectively estimating the underlying distribution using a small number of samples to improve the performance of {\X}. 

\section*{Acknowledgment}
The authors are thankful to Flavio du pin Calmon for providing helpful pointers to related work. This work was supported in part by NSF grant DRL-2112635.

\bibliography{aistats}
\bibliographystyle{plainnat}

\clearpage
\section*{Checklist}

 \begin{enumerate}

 \item For all models and algorithms presented, check if you include:
 \begin{enumerate}
   \item A clear description of the mathematical setting, assumptions, algorithm, and/or model. [Yes]
   \item An analysis of the properties and complexity (time, space, sample size) of any algorithm. [Yes]
   \item (Optional) Anonymized source code, with specification of all dependencies, including external libraries. [Yes]
 \end{enumerate}

 \item For any theoretical claim, check if you include:
 \begin{enumerate}
   \item Statements of the full set of assumptions of all theoretical results. [Yes]
   \item Complete proofs of all theoretical results. [Yes]
   \item Clear explanations of any assumptions. [Yes]     
 \end{enumerate}

 \item For all figures and tables that present empirical results, check if you include:
 \begin{enumerate}
   \item The code, data, and instructions needed to reproduce the main experimental results (either in the supplemental material or as a URL). [Yes]
   \item All the training details (e.g., data splits, hyperparameters, how they were chosen). [Yes]
         \item A clear definition of the specific measure or statistics and error bars (e.g., with respect to the random seed after running experiments multiple times). [Yes]
         \item A description of the computing infrastructure used. (e.g., type of GPUs, internal cluster, or cloud provider). [Yes]
 \end{enumerate}

 \item If you are using existing assets (e.g., code, data, models) or curating/releasing new assets, check if you include:
 \begin{enumerate}
   \item Citations of the creator If your work uses existing assets. [Yes]
   \item The license information of the assets, if applicable. [Yes]
   \item New assets either in the supplemental material or as a URL, if applicable. [Not Applicable]
   \item Information about consent from data providers/curators. [Not Applicable]
   \item Discussion of sensible content if applicable, e.g., personally identifiable information or offensive content. [Not Applicable]
 \end{enumerate}

 \item If you used crowdsourcing or conducted research with human subjects, check if you include:
 \begin{enumerate}
   \item The full text of instructions given to participants and screenshots. [Not Applicable]
   \item Descriptions of potential participant risks, with links to Institutional Review Board (IRB) approvals if applicable. [Not Applicable]
   \item The estimated hourly wage paid to participants and the total amount spent on participant compensation. [Not Applicable]
 \end{enumerate}

 \end{enumerate}

\newpage
\appendix
\onecolumn

\section{MATHEMATICAL PROOFS}
\label{sec:proofs}
\localtableofcontents

\subsection{Proof of Lemma~\ref{lem:feasibility}}
\label{sec:PIC}
In this section, we present the necessary and sufficient conditions required for achieving perfect concept erasure. First, we introduce the notion of principal inertia components, which will be instrumental in deriving the conditions for perfect erasure.

\textbf{Principal Inertia Components (PICs)}. PICs can be viewed as a decomposition of the joint distribution, $p_{X, A}$, between two random variables $X$ and $A$ (a branch of study dedicated to this is called \textit{correspondence analysis}~\citep{greenacre2017correspondence}). Principal inertia components exhibit interesting properties and are often used to define common measures like maximum correlation, $\rho_m(X, A)$, and Chi-squared correlation,
$\chi^2(X, A)$.

\begin{definition}[Principal Inertia Components~\citep{du2017principal}]
    Let random variables $X$ and $A$ have finite support sets $\mathcal X$ and $\mathcal A$ respectively, and joint distribution $p_{X,A}$. In addition, let $f_0 : X \rightarrow \mathbb{R}$ and $g_0 : Y \rightarrow \mathbb{R}$ be the constant functions $f_0(x) = 1$ and $g_0(a) = 1$. For $k \in \mathbb{Z}^+$, we (recursively) define:
    \begin{align*}
    \lambda_k(X; A ) = \max &\Big\{\mathbb{E}[f(X)g(A)]^2 \, \Big| \, \mathbb{E}[f(X)f_j(X)] = 0, \\
    &\phantom{=} \mathbb{E}[g(Y)g_j(A)] = 0, \, j \in \{0, \ldots, k - 1\} \Big\},
    \end{align*}
    where $f \in \mathcal{L}_2(p(X))$, $g \in \mathcal{L}_2(p(Y))$ are square-integrable functions, and 
    \begin{align*}
        (f_k, g_k) \coloneqq \argmax &\Big\{\mathbb{E}[f(X)g(Y)]^2 \Big| \, \mathbb{E}[f(X)f_j(X)] = 0, \\
    &\phantom{=} \mathbb{E}[g(Y)g_j(A)] = 0, \, j \in \{0, \ldots, k - 1\} \Big\}.
    \end{align*}
    $\lambda_k(X; A)$ values are known as the {principal inertia components} (PICs) of the joint distribution, $p(X, A)$. The functions $f_k$ and $g_k$ are the {principal functions} of $X$ and $A$.
    \label{def:pic}
\end{definition}

The above definition presents a characterization of PICs as the construction of zero-mean, unit variance functions $f(X)$ and $g(A)$, which maximizes the correlation $\mathbb{E}[f(X)g(A)]$ without using any additional information. Note that alternative characterizations of PICs exist that include estimation errors and geometric interpretations~\citep{du2017principal}. 

From Definition~\ref{def:pic}, we observe that PICs satisfy: $\lambda_{k+1}(X; A) \leq \lambda_k(X; A) \leq 1$. PICs also correspond to the singular values of the joint probability matrix, specifically $\sqrt{\lambda_k(X; A)}$ is the $(k+1)$-st singular value of $\mathbf{Q}$, where $\mathbf{Q} = \mathbf{D}_X^{-1/2} \mathbf{P} \mathbf{D}_A^{-1/2}$. $\mathbf{D}_X \in \mathbb{R}^{m \times m}$ and $\mathbf{D}_A \in \mathbb{R}^{n \times n}$ are diagonal matrices with entries $[\mathbf{D}_X]_{i, i} = p_X(i)$ and $[\mathbf{D}_A]_{j, j} = p_A(j)$ respectively, where $m = |\mathcal{X}|$ and $n=|\mathcal{A}|$. $\mathbf{P} \in \mathbb{R}^{m \times n}$ is a matrix with entries $[\mathbf{P}]_{i, j}=p_{X, A}(i, j)$. 

\textbf{$k$-correlation}. Next, we introduce the notion of $k$-correlation and discuss how it relates to commonly known correlation metrics.

\begin{definition}[$k$-correlation] For random variables $X$, $A$ with finite support sets $\mathcal{X}, \mathcal{A}$ respectively, $k$-correlation between $X$ and $A$ is defined as:
\begin{equation}
    \mathcal{J}_k(X, A) \coloneqq \sum_{i=1}^k \lambda_k(X, A).
\end{equation}  
\end{definition}

The special cases of $k$-correlation correspond to well-known correlation metrics. For example, $\mathcal{J}_1(X, A) = \rho_m(X, A)^2$ is the R\'enyi maximum correlation~\citep{buja1990remarks}. Similarly, if $d = \min \{|\mathcal{X}|, |\mathcal{A}|\}-1$, then $\mathcal{J}_d(X, A) = \chi^2(X, A)$ is the chi-squared correlation.

Note that the smallest PIC is $\lambda_d(X; A)$ because $\mathbf{Q}$ has rank $(d+1)$. We will show how $\lambda_d(X, A)$ plays an important role in understanding the feasibility of perfect concept erasure. Next, we introduce the concept of erasure-utility coefficient for a given joint distribution, $p_{A, X}$.

\begin{definition}[Erasure-Utility Coefficient]
    The optimal erasure-utility coefficient for $p_{A,X}$ is:
    \begin{equation}
        v^* \coloneqq \inf \frac{I(Z; A)}{I(Z; X)}.\nonumber
    \end{equation}
    \label{def:erasure-util-coeff}
\end{definition}
We observe that the erasure-utility coefficient is closely related to concept erasure, where we want to reduce $I(Z; A)$ while maximizing $I(Z; X)$. Under perfect concept erasure, $v^* = 0$. Next, we restate the result from~\cite{du-etal-2020-general}, which provides an upper bound for the coefficient, $v^*$.

 \begin{restatable}[Optimal Erasure-Utility Coefficient~\citep{du2017principal}]{lemma}{erasurecoeff}
    Let $d \coloneqq \min\{|\mathcal A|, |\mathcal X|\} - 1$, and $\lambda_d(A, X)$  be the smallest principal inertia component of any distribution $p({A,X})$ with finite support $\mathcal A \times \mathcal{X}$. Then,
    \begin{equation}
        v^* \leq \begin{cases}
            \lambda_d(A, X), &\mathrm{ if } |\mathcal{X}| \leq |\mathcal{A}| \nonumber\\
            0, &\mathrm{otherwise}
        \end{cases}.
    \end{equation}
    \label{lem:erasure-coeff}
\end{restatable}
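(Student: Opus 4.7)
The plan is to upper-bound the infimum $v^{*}$ by exhibiting, in each regime, a specific channel $p(Z\mid X)$ whose ratio $I(Z;A)/I(Z;X)$ already meets the claimed bound. In the easy regime $|\mathcal X|>|\mathcal A|$, nothing beyond Lemma~\ref{lem:feasibility} is needed: that lemma already produces an erasure function achieving $I(Z;A)=0$ and $I(Z;X)>0$, so $v^{*}\leq 0$, giving $v^{*}=0$. For the main regime $|\mathcal X|\leq|\mathcal A|$ (so $d=|\mathcal X|-1$) I would use a low-noise binary channel: fix any non-constant $g:\mathcal X\to\mathbb R$ with $\mathbb E[g(X)]=0$, and for small $\epsilon>0$ let $Z\in\{0,1\}$ be generated by
\[
p(Z=0\mid X=x)=\tfrac12+\epsilon g(x),\qquad p(Z=1\mid X=x)=\tfrac12-\epsilon g(x).
\]

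Expanding the binary entropy about $1/2$ to second order (using $h''(1/2)=-4$) gives, as $\epsilon\to 0$,
\[
I(Z;X)=2\epsilon^{2}\,\mathbb E\bigl[g(X)^{2}\bigr]+O(\epsilon^{4}),\qquad I(Z;A)=2\epsilon^{2}\,\mathbb E\bigl[(Tg)(A)^{2}\bigr]+O(\epsilon^{4}),
\]
where $(Tg)(a)=\mathbb E[g(X)\mid A=a]$ is the conditional-expectation operator $T:L^{2}(p_X)\to L^{2}(p_A)$. Dividing and letting $\epsilon\to 0$ yields
\[
\lim_{\epsilon\to 0}\frac{I(Z;A)}{I(Z;X)}=\frac{\|Tg\|_{L^{2}(p_A)}^{2}}{\|g\|_{L^{2}(p_X)}^{2}},
\]
so the task reduces to minimising this Rayleigh quotient over admissible $g$.

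By Definition~\ref{def:pic} the principal functions $f_0=1,f_1,\dots,f_d$ form an orthonormal basis of $L^{2}(p_X)$ of right singular vectors of $T$ with singular values $1,\sqrt{\lambda_1},\dots,\sqrt{\lambda_d}$; the constraint $\mathbb E[g(X)]=0$ only removes the $\lambda_0=1$ component, so Courant--Fischer gives
\[
\inf_{g\neq 0,\,\mathbb E[g(X)]=0}\frac{\|Tg\|_{L^{2}(p_A)}^{2}}{\|g\|_{L^{2}(p_X)}^{2}}=\lambda_d(A,X),
\]
attained at $g=f_d$. Choosing this $g$ in the construction above and letting $\epsilon\to 0$ proves $v^{*}\leq\lambda_d(A,X)$. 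The main technical point I would have to handle carefully is the second-order expansion itself: the $O(\epsilon^{4})$ remainders need to be uniform in $x$ (so that the expectations commute with the expansion), and the pointwise $\epsilon\to 0$ limit of the ratio has to be converted into a genuine upper bound on the infimum over \emph{all} channels, not merely over the one-parameter family written down. Once that local-perturbation calculus is set up, the reduction to the singular-value characterisation of $\lambda_d$ is essentially immediate from the variational content of Definition~\ref{def:pic}.
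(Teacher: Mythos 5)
The paper does not actually prove this lemma: it is restated verbatim from \citet{du2017principal} and used as a black box, so there is no in-paper proof to compare against. Judged on its own terms, your sketch for the regime $|\mathcal X|\le|\mathcal A|$ is sound in outline --- the second-order expansion of the binary channel's mutual informations is correct (for finite $\mathcal X$ the $O(\epsilon^4)$ remainders are automatically uniform since $\max_x|g(x)|$ is finite, and $v^*\le I(Z_\epsilon;A)/I(Z_\epsilon;X)$ for every fixed $\epsilon$, so taking $\epsilon\to 0$ along the family immediately bounds the infimum by the Rayleigh quotient), and the reduction to the smallest singular value of the conditional-expectation operator $T$ restricted to $\mathbf 1^\perp$ is exactly the variational content of Definition~\ref{def:pic}, since when $|\mathcal X|\le|\mathcal A|$ the principal functions $f_0,\dots,f_d$ with $d=|\mathcal X|-1$ span $L^2(p_X)$.

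The genuine gap is in your handling of the regime $|\mathcal X|>|\mathcal A|$: you invoke Lemma~\ref{lem:feasibility}, but in this paper Lemma~\ref{lem:feasibility} is \emph{proved from} Lemma~\ref{lem:erasure-coeff} (its proof reads ``From Lemma~\ref{lem:erasure-coeff} and Definition~\ref{def:erasure-util-coeff}\dots''), so the argument is circular. Fortunately your perturbation construction already handles this case without any appeal to Lemma~\ref{lem:feasibility}: when $|\mathcal X|>|\mathcal A|$ the operator $T$, restricted to $\mathbf 1^\perp\subset L^2(p_X)$, maps a space of dimension $|\mathcal X|-1$ into $\mathbf 1^\perp\subset L^2(p_A)$ of dimension $|\mathcal A|-1<|\mathcal X|-1$, so $\ker T\cap\mathbf 1^\perp$ is nontrivial. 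Choosing $g$ there gives $\|Tg\|^2/\|g\|^2=0$, hence $v^*\le 0$ by the same limiting argument; combined with $v^*\ge 0$ (both mutual informations are nonnegative) this yields $v^*=0$. Replacing the appeal to Lemma~\ref{lem:feasibility} with this kernel argument removes the circularity and unifies the two cases under a single construction.
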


\feasibility*
\begin{proof}[Proof of Lemma~\ref{lem:feasibility}]
    From Lemma~\ref{lem:erasure-coeff} and Definition~\ref{def:erasure-util-coeff}, we can say that $v^*=0$ when either $\lambda_d(A, X) = 0$ or $|\mathcal{X}| > |\mathcal{A}|$. 
\end{proof}

\subsection{Proof of Lemma~\ref{lem:mi_result}}
\label{sec:lem1_proof}
\miresult*
\begin{proof}
The proof relies on the following chain rule for mutual information:
\begin{equation}
    I(Z; X) = I(Z; A) + I(Z; X|A) - I(Z; A|X)
\end{equation}
For completeness, we prove it below:
\begin{align}
    &I(Z; A|X) = H(Z|X) - H(Z|A, X)\nonumber\\
    \Rightarrow\; &I(Z; A|X) = H(Z) - H(Z) + H(Z|A) - H(Z|A) + H(Z|X) - H(Z|A, X)\nonumber\\
    \Rightarrow\; &I(Z; A|X) = \left[H(Z) - H(Z|A)\right] - \left[H(Z) - H(Z|X)\right] + \left[H(Z|A)   - H(Z|A, X)\right]\nonumber\\
    \Rightarrow\; &I(Z; A|X) = I(Z; A) - I(Z; X) + I(Z; X|A)\nonumber\\
    \Rightarrow\; &I(Z; X) = I(Z; A) - I(Z; A|X) + I(Z; X|A). \nonumber
\end{align}

In the above result, we plug-in $I(Z; A)=0$ (due to perfect erasure) and $I(Z; A|X)=0$ (due to Markov assumption~\ref{item:A1}). We obtain the following result:
\begin{align}
    &I(Z; X) = I(Z; X|A).
\end{align}

Now, replacing the upper bound of $I(Z; X|A) \leq H(X|A)$, we get:
\begin{align}
    I(Z; X) \leq H(X|A). \nonumber
\end{align}

The equality is satisfied when the following holds:
\begin{align*}
    &I(Z; X|A) = H(X|A) \nonumber\\
    \Rightarrow\; &H(X|A) - H(X|Z, A) = H(X|A) \nonumber\\
     \Rightarrow\; &H(X|Z, A) = 0. \nonumber\\
\end{align*}
This completes the proof.
\end{proof}

\subsection{Justification for Data Constraints in Definition~\ref{thm:data-cond}}
\label{sec:data-cond}

In this section, we provide the justification behind the choice of data constraints necessary for optimal erasure, where the distribution of representations should be permutations. 
First, we present an additional Lemma that would be useful in developing the intuition behind the data constraints. 
\begin{restatable}[Zero Conditional Entropy]{lemma}{zce}
    If $X$ and $Y$ are random variables defined on finite supports $\mathcal{X}$ and $\mathcal{Y}$ respectively and $Y=f(X)$, then $H(X|Y) = 0$ if and only if $f$ is a bijective map.
    \label{lem:zce}
\end{restatable}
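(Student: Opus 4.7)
\textbf{Proof plan for Lemma~\ref{lem:zce}.} The statement is an ``iff'', so I will handle the two directions separately, and in each direction reduce to the fact that a non-negative sum vanishes iff every summand vanishes. Throughout, I interpret ``bijective'' as bijective from $\mathrm{supp}(X)$ to $\mathrm{supp}(Y)$; this is the only sensible reading since outside the support the values of $f$ cannot be probed, and $Y=f(X)$ already forces $f(\mathrm{supp}(X))\subseteq \mathrm{supp}(Y)$ and in fact equality.

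\textbf{($\Leftarrow$).} Assume $f$ is bijective between the supports. Then there is an inverse $g:\mathrm{supp}(Y)\to \mathrm{supp}(X)$ with $g\circ f = \mathrm{id}$. Hence $X = g(Y)$ almost surely, and so for every $y\in\mathrm{supp}(Y)$ the conditional distribution $P(X\mid Y=y)$ is a point mass at $g(y)$, giving $H(X\mid Y=y)=0$ and therefore $H(X\mid Y)=\sum_y P(Y=y)H(X\mid Y=y)=0$.

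\textbf{($\Rightarrow$).} Assume $H(X\mid Y)=0$. Because $H(X\mid Y=y)\ge 0$ and $P(Y=y)\ge 0$, each summand in $H(X\mid Y)=\sum_y P(Y=y)H(X\mid Y=y)$ must vanish, so $H(X\mid Y=y)=0$ for every $y\in\mathrm{supp}(Y)$. This means the conditional distribution $P(X\mid Y=y)$ is concentrated on a single $x_y\in\mathrm{supp}(X)$, which defines a function $g:\mathrm{supp}(Y)\to\mathrm{supp}(X)$ with $X=g(Y)$ a.s. Since $Y=f(X)$, we get $f(g(y))=y$ for all $y\in\mathrm{supp}(Y)$ and $g(f(x))=x$ for all $x\in\mathrm{supp}(X)$, so $f$ restricted to $\mathrm{supp}(X)$ is a bijection onto $\mathrm{supp}(Y)$ with inverse $g$.

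\textbf{Expected obstacle.} There is no real technical difficulty; the only thing to be careful about is the domain/codomain convention for ``bijective''. If the authors instead intend bijectivity as a map $\mathcal{X}\to\mathcal{Y}$ (not just on supports), then the forward direction needs the mild extra assumption that $\mathrm{supp}(X)=\mathcal{X}$ and $\mathrm{supp}(Y)=\mathcal{Y}$, which is the standing finite-support convention of the paper (see~\ref{item:A2}--\ref{item:A4}). I will state this convention explicitly at the start of the proof to avoid ambiguity, and then the argument above applies verbatim.
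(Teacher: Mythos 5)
Your proposal is correct and follows essentially the same route as the paper's proof: decompose $H(X\mid Y)=\sum_y P(Y=y)\,H(X\mid Y=y)$, use non-negativity of each term to force every conditional $P(X\mid Y=y)$ to be degenerate, and read off bijectivity of $f$ (the paper phrases the forward direction as a short contradiction argument, you instead construct the two-sided inverse $g$ explicitly, but this is a cosmetic rather than substantive difference). Your remark about restricting ``bijective'' to the supports is a useful clarification that the paper leaves implicit.
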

\begin{proof}[Proof of Lemma~\ref{lem:zce}]
    First, we prove the forward direction, which says that $f$ is bijective given $H(X|Y)=0$.
    \begin{align}
        &H(X|Y) = 0 \nonumber\\
        \Rightarrow & \sum_{y \in \mathcal{Y}} p(y)H(X|Y=y) = 0\nonumber\\
        \xRightarrow{(a)} &\forall y \in \mathcal{Y}, H(X|Y=y) = 0 \label{eqn:zero-ent}
    \end{align}
    (a) holds because $\forall y \in \mathcal{Y},\; p(y)>0$. Eq.~\ref{eqn:zero-ent} implies that the probability distribution $\forall y, P(X|Y=y)$ is degenerate and there exists $x \in \mathcal{X}$ such that
    \begin{equation}
        \forall y, \exists x \text{ s.t. } P(X=x|Y=y) = 1, \forall x' \neq x, P(X=x'|Y=y) = 0.\label{eqn:prob}
    \end{equation} 
    Recall that we have $Y=f(X)$, and we want to show that $f$ is invertible (and therefore bijective). Suppose not, then  $\exists x, x' \in X$ that $f(x)=f(x')$. This contradicts Eq.~\ref{eqn:prob} because  $P(x|y)=1$. Therefore, $f$ must be invertible and bijective.

Next, we prove the backward direction, which shows that $H(X|Y)=0$ if $f$ is bijective. 

When $f$ is bijective, the following conditional distribution holds for a given $y$:
\begin{equation}
    P(X=f^{-1}(y)|Y=y) = 1 \text{ and } P(X=x|Y=y) = 1, \forall x\neq f^{-1}(y).\label{eqn:cond-dist}
\end{equation}
For a fixed $y$, using Eq.~\ref{eqn:cond-dist},  we can write the conditional entropy as:
\begin{align}
    H(X|Y=y) &= -\sum_{x \in \mathcal{X}} P(X=x|Y=y) \log P(X=x|Y=y) \nonumber\\
    &= \left[1\cdot\log 1 + \sum_{x \neq f^{-1}(y)} 0 \cdot \log 0\right] \nonumber\\
    &= 0.
\end{align}
Using the above result, we can write the following:
\begin{align}
    H(X|Y) = \sum_{y \in \mathcal{Y}} p(y)H(X|Y=y) = \sum_{y \in \mathcal{Y}} p(y) \cdot 0 = 0.
\end{align}
This completes the proof.
\end{proof}

With the above result, we are prepared to explain the data constraints in Definition~\ref{thm:data-cond}. 

\datacond*

The intuition of this definition is that we want to find an $f$ that makes use of the bijective maps between

First, we note that due to the disjoint support assumption~\ref{item:A4}, we can write $f$ as a piecewise function:
\begin{equation}
f(x) = \{f_i(x)|\;x\in \mathcal{X}_i\}, \text{ where } f_i: \mathcal{X}_i \rightarrow \mathcal{Z}.\label{eqn:piece}   
\end{equation}
    From Lemma~\ref{lem:mi_result},  we know that the utility outer bound holds when the following condition is true: 
    \begin{align}
        & H(X|Z, A) = 0 \nonumber\\
        \xRightarrow{(a)}\; &H(X|Z, A=a_i) = 0, \forall i \in \{1, \ldots, |\mathcal{A}|\}\label{eqn:1-1}
    \end{align}
    {(a) holds because $\forall i \in \{1,\dots,|\mathcal{A}|\}, p(a_i)>0$ by definition of the support. 
    Applying Lemma~\ref{lem:zce}, we observe that Eq.~\ref{eqn:1-1} is satisfied only when all $f_i$'s are bijective maps.}

    The bijective maps $f_i$'s imply that the support size of the input and output must be the same $|\mathcal{X}_i| = |\mathcal{Z}|$ (the support of $\mathcal{X}$ and $\mathcal{Z}$ is finite as stated in Assumption~\ref{item:A2}). This implies for every pair of concept group $\forall (i, j), |\mathcal{X}_i| =|\mathcal{X}_j|$, which proves the first data constraint.

    The bijective maps $f_i$'s also imply that $\forall i$ the distributions $P(Z|A=a_i)$ and $P(X|A=a_i)$ are permutations of each other, as shown below:
        \begin{align}
        \forall i, P[Z=z|A=a_i]
        &= \sum_{x \in \mathcal{X}_i} \mathbbm{1}\left[f_i(x)=z\right]P\left[X=x|A=a_i\right] \nonumber\\
        &= P_i\left[X=f_i^{-1}(z)\right]\nonumber\\
        &= P_i\left[X=\sigma_i(x)\right],\label{eqn:inp-out-perm}
    \end{align}
    where $\sigma_i(\cdot)$ is a permutation function and $P_i(X) = P(X|A=a_i)$.

    Next, we focus on the condition of perfect erasure:
    \begin{align}
         &I(Z; A) = 0\nonumber\\
         \Rightarrow\; &H(Z) - H(Z|A) = 0\nonumber\\
         \Rightarrow\; &H(Z) = H(Z|A) \nonumber\\
         \Rightarrow\; &H(Z) = H(Z|A=a_i), \forall i \in \{1, \ldots, |\mathcal{A}|\}.\label{eqn:entropy}
    \end{align}

    Next, we note that the output distribution can be written as: $P(Z) = \sum_i p(a_i)P(Z|A=a_i)$. We can write down the entropy of $Z$ as the entropy of the underlying distributions as below:
    \begin{align}
        H(Z) &= H\left(\sum_i p(a_i)P[Z|A=a_i]\right) \nonumber\\
        &\geq  \sum_i p(a_i)H\left(Z|A=a_i\right).\label{eqn:entropy-ineq}
    \end{align}
    Eq.~\ref{eqn:entropy-ineq} holds due to the concavity of the entropy function. However, for the Eq.~\ref{eqn:entropy} to hold, the equality in Eq.~\ref{eqn:entropy-ineq} needs to be satisfied. The equality is satisfied only when all the underlying distributions are equal, as shown below:
    \begin{align}
         &P[Z|A=a_i]=P[Z|A=a_j], \forall (i, j) \nonumber\\
         \Rightarrow\; &P_i[X=\sigma_i(x)]=P_j[X=\sigma_j(x)]\label{eqn:perm-eq}\\
         \Rightarrow\;&P_i[X=x]=P_j[X=\sigma_i^{-1}\sigma_j(x)]\label{eqn:inv}\\
         \Rightarrow\;&P_i[X=x]=P_j[X=\sigma_{ij}(x)],\nonumber
    \end{align}
where $\sigma_{ij} = \sigma_i \odot \sigma_j$. Eq.~\ref{eqn:perm-eq} holds due to the result in Eq.~\ref{eqn:inp-out-perm}. The result in Eq.~\ref{eqn:inv} implies that for all pairs of concept groups $(i, j)$, the underlying distributions are permutations of each other. This explains the second data constraint.

\subsection{Justification for Method~\ref{lem:equal-pef}}
\label{sec:equal-pef}

In this section, we will show that the perfect erasure function defined in Eq.~\ref{eqn:pef} achieves the MI outer bounds.

\equalpef*
First, we show that the perfect erasure function defined in Eq.~\ref{eqn:pef} achieves the utility outer bound. Using Lemma~\ref{lem:mi_result}, we get the following:
\begin{align}
    I(Z; X) &= I(Z; X|A) \nonumber\\
    &= \sum_i p(a_i)I(Z; X|A=a_i) \nonumber\\
    &= \sum_i p(a_i)[H(X|A=a_i) - H(X|Z, A=a_i)] \nonumber\\
    &= \sum_i p(a_i)H(X|A=a_i)\label{eqn:last}\\
    &= H(X|A).\nonumber
\end{align}
Eq.~\ref{eqn:last} can hold since $H(X|Z, A=a_i) = 0$ if $f_i: \mathcal{X}_i \rightarrow \mathcal{Z}$ is a bijective map.

Next, we show that the erasure function achieves perfect concept erasure. We compute the conditional distribution:
\begin{align}
    \forall z\in \mathcal{Z}, \;P(Z=z|A=a_i) &= \sum_{x \in \mathcal{X}_i} \mathbf{1}[f_i(x)=z]P(X=x|A=a_i)\label{eqn:bij}\\
    &= P_i(X=f_i^{-1}(z))\nonumber\\ 
    &= Q(z). \label{eqn:equal}
\end{align}
Eq.~\ref{eqn:bij} holds because $f_i$ is a bijective map. Eq.~\ref{eqn:equal} holds because of the erasure function definition in Eq.~\ref{eqn:perm_set}, where $P(X=x|A=a_i)=P_i(x) = Q(f_i(x))$. This implies the probability distribution over $\mathcal{Z}$ is:
\begin{align}
    \forall z\in \mathcal{Z},\; P(Z=z) &= \sum_i p(a_i)P[Z=z|A=a_i]\nonumber\\
    &= \sum_i p(a_i)Q(z) \nonumber\\ 
    &= Q(z). \label{eqn:eq_z}
\end{align}

Next, we use Eq.~\ref{eqn:equal} and Eq.~\ref{eqn:eq_z} to derive the privacy mutual information as shown below:
\begin{align}
    I(Z; A) &= H(Z) - H(Z|A) \nonumber\\
    &= H(Z) - \sum_i p(a_i)H(Z|A=a_i) \nonumber\\
    &= H(Q) - \sum_i p(a_i)H(Q) \label{eqn:q}\\
    &= H(Q) - H(Q) \nonumber\\
    &= 0.\nonumber
\end{align}
Eq.~\ref{eqn:q} holds because  $P(Z=z)=Q(z)$ and $\forall i \in \{1, \ldots, |\mathcal{A}|\}, \;P(Z=z|A=a_i)=Q(z)$.

The idea of this is to find the erasure function defined in Method~\ref{lem:equal-pef}, which achieves the outer bounds $I(Z; X) = H(X|A)$ and $I(Z; A) = 0$.

\section{METHOD}

\subsection{Optimization Procedure}
\label{appdx:opt}
In this section, we present an alternative method to optimize the objective function presented in Eq.~\ref{eqn:q-obj}, when the input distributions of the concept groups are unequal. This optimization is dependent on two sets of variables: common support $Q$ and coupling maps $\Gamma^{(i)} = \Gamma(P_i, Q)$. The key idea is to eliminate one set of variables and write the objective function only in terms of the coupling maps, since $Q = \sum_k \Gamma^{(i)}_{kj}$ is equal to the marginals of the coupling maps.

Therefore, we can write the objective as shown below:
\begin{align}
    &\max_{\{\Gamma^{(i)}\}_{i=1}^{|\mathcal{A}|}} \frac{1}{|\mathcal{A}|}H\left(\sum_{k}\Gamma^{(i)}_{kj}\right) - \sum_i p(a_i)H_{\Gamma^{(i)}}\left(P_i, \sum_{k}\Gamma^{(i)}_{kj}\right), \label{eq:pgd-obj}\\
    &\text{ where } \forall (i_1, i_2) \in [1, |\mathcal{A}|],\; \sum_{k}\Gamma^{(i_1)}_{kj} - \sum_{k}\Gamma^{(i_2)}_{kj} = 0 \text{ and } \sum_{k}\Gamma^{(i)}_{kj} = P_i.\label{eq:pgd-consts}
\end{align}
The above optimization can be solved using projected gradient descent~\citep{madry2017towards}, where we take unconstrainted gradient steps using the differentiable objective in Eq.~\ref{eq:pgd-obj} and then projecting the solutions obtained, $\{\Gamma_i\}$ to the linear constraints in Eq.~\ref{eq:pgd-consts}. However, since Eq.~\ref{eq:pgd-obj} is non-convex, solving this optimization still doesn't guarantee convergence to the global optima.

We applied this optimization method in small-scale synthetic setups and found it to function well. However, this method is not scalable to probability distributions with a large support set. This is because the number of linear constraints in Eq.~\ref{eq:pgd-consts} blows up, and the projection step is extremely expensive in practice. 

\clearpage
\section{EXPERIMENTS}
\label{sec:addl_exp}
\localtableofcontents

\subsection{Experimental Setup}
Performing concept erasure using {\X} does not require any parameter estimation except when the distributions are unequal. In this scenario, we use the Bayesian Optimization library~\citep{bayesopt} and use the upper confidence bound (UCB) of exploitation and exploration as the acquisition function (with $\kappa = 2.5$).  For evaluation of $\mathcal{V}$-information and MSE loss, we use the MLPClassifier and MLPRegressor functions from the scikit-learn~\citep{pedregosa2011scikit} library with its default settings. All the baseline models were trained on a 22GB NVIDIA Quadro RTX 6000 GPU and experiments were performed using the PyTorch~\citep{pytorch} framework. For baseline methods, we use the hyperparameters presented in the corresponding papers.

Next, we discuss the data generation process for the synthetic experiments reported in Section~\ref{sec:exp}. We consider two concept groups and sample $N$ 3D representations from a uniform distribution for each group. These representations form the support of the probability distributions we define next. We either define an equal (uniform or Gaussian) or unequal probability distribution using the support elements from each group. Using these distributions, we sample 10K representations from each group to form our final representation set. The concept label for each representation is the group from which it was sampled. For toxicity classification, we use the Jigsaw dataset, which contains online comments and the label is a fraction between 0 and 1 indicating the toxicity of the comment. We consider online comments associated with the following religion groups -- {Buddhist}, {Christian}, {Hindu}, {Jewish}, {Muslim}, and {others}.

\begin{figure}[h!]
    \centering
    \includegraphics[height=0.3\textwidth, keepaspectratio]{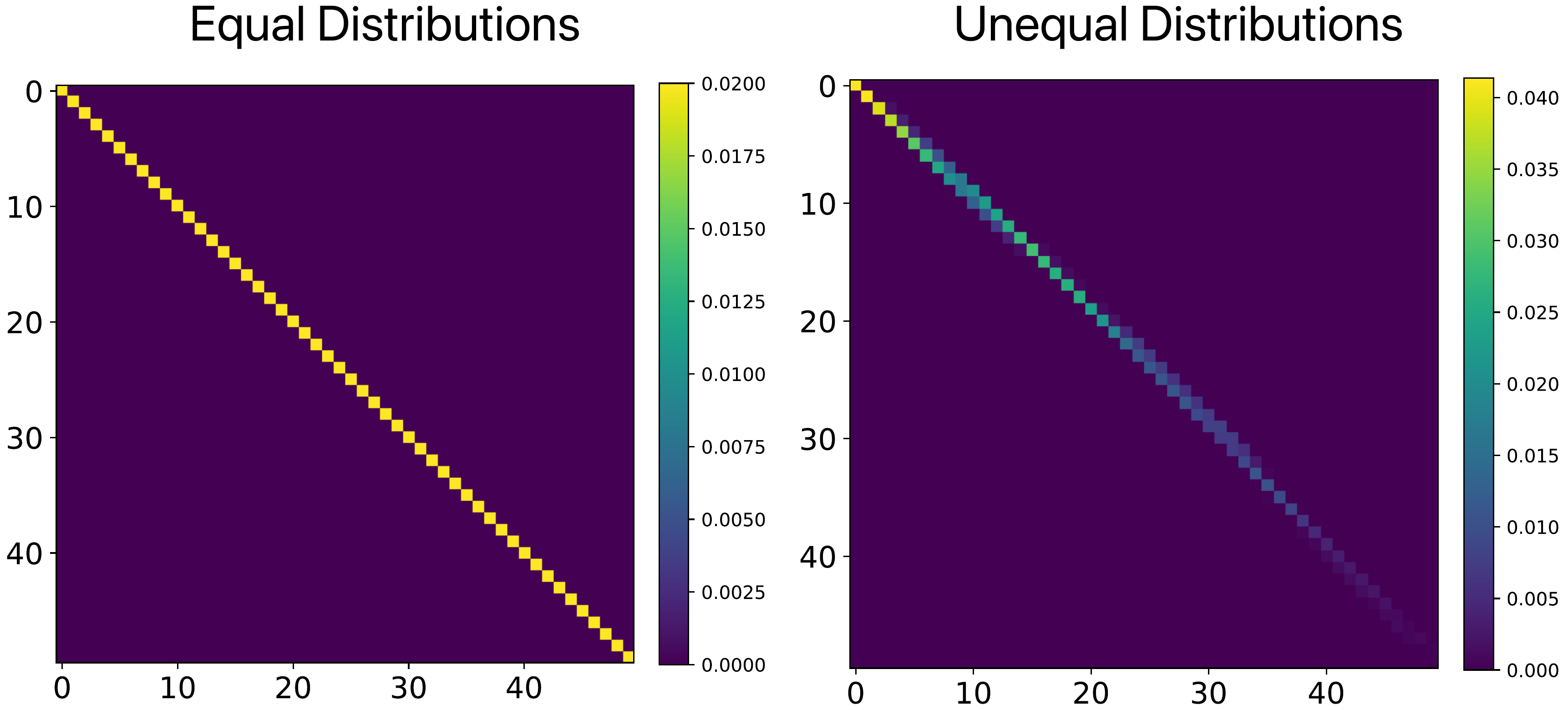}
    \caption{
    (\textit{Left}) We show the minimum entropy coupling (joint distribution) between two equal distributions. We observe that the coupling map is a diagonal matrix indicating a 1-1 bijective map. (\textit{Right}) We show that minimum entropy coupling between two unequal distributions is not a bijective map and individual support elements can be mapped to different elements. 
    }
    \label{fig:coupling}
\end{figure}

\subsection{Analysis Experiments}
We conduct analysis experiments to inspect the erasure functions obtained using the formulation in Eq.~\ref{eq:erasure-func}. Specifically, we visualize the coupling maps (joint distributions) formed between two distributions $\Gamma(P_i, Q)$. We consider two scenarios: (i) equal distributions ($P_i=Q$) and (ii) unequal distributions ($P_i \neq Q$). In Figure~\ref{fig:coupling}, we visualize the minimum entropy coupling maps formed in these two settings. In Figure~\ref{fig:coupling} (left), we observe that the coupling map is bijective (1-1 map), as predicted by Lemma~\ref{lem:equal-pef}. In Figure~\ref{fig:coupling} (center), we observe that the coupling map is not bijective and therefore perfect erasure is not achieved.

\begin{figure}[t!]
    \centering
    \includegraphics[width=0.8\linewidth]{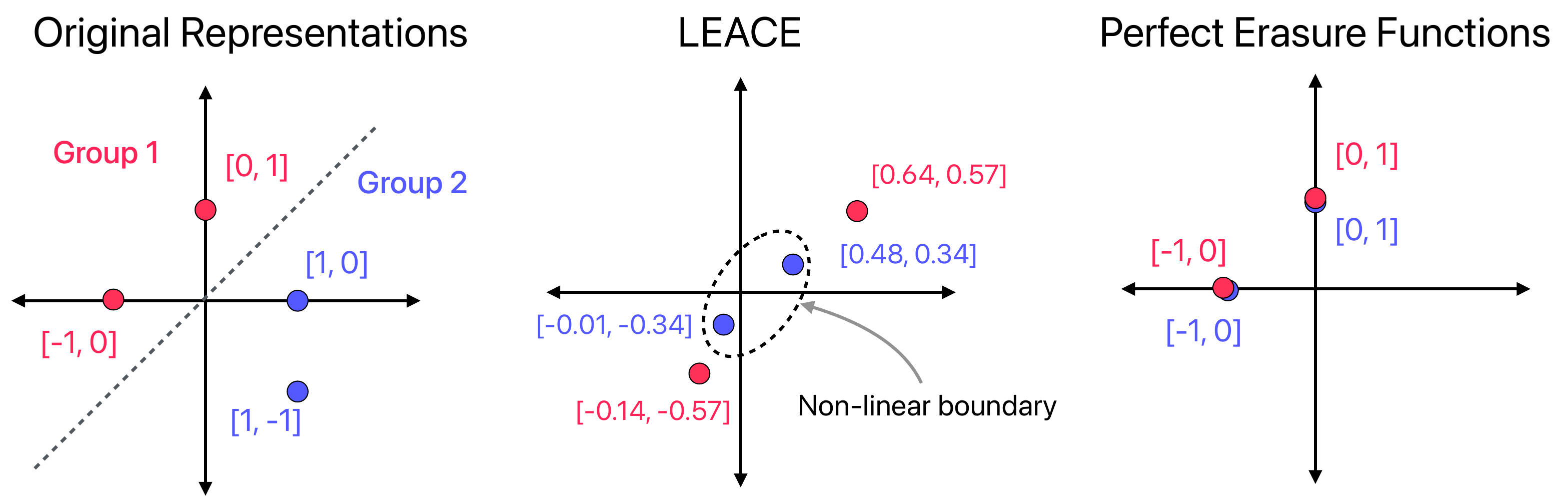}
    \caption{A toy example to illustrate the efficacy of erasure techniques LEACE and {\X}. (\textit{Left}) The original representation space with two linearly separable concept groups. (\textit{Middle}) The erased representations obtained using LEACE, where there still exists a non-linear boundary separating the two groups. (\textit{Right}) The erased representations from {\X} where the concept has been perfectly erased.}
    \label{fig:leace}
\end{figure}

\subsection{Comparison with LEACE}\label{sec:leace-vis}

In this section, we compare our proposed method {\X} with LEACE~\citep{belrose2024leace}. LEACE is another perfect erasure technique that prevents linear adversaries from extracting concept information from erased representations. However, stronger non-linear adversaries may still be able to extract concept information. We illustrate this using a toy example in Figure~\ref{fig:leace}. In this example, we consider a 2D original representation space (left plot in Figure~\ref{fig:leace}) with two concept groups (each with only two elements). After erasure using LEACE (Figure~\ref{fig:leace} (middle)), we observe that there still exists a non-linear boundary that can separate the two groups and thereby reveal concept information. In contrast, {\X} precisely maps elements between the two groups (Figure~\ref{fig:leace} (right)), making it impossible for any adversary to distinguish them. This example illustrates a scenario where LEACE can reveal concept information while {\X} remains effective.

\clearpage

\section{BROADER IMPACT}
\label{sec:broad_impacts}

Erasing sensitive concepts from data representations can reduce bias and enhance privacy, but it may also lead to the loss of valuable information, potentially diminishing the effectiveness of a machine learning model. The definition of sensitive concept attributes varies widely across cultural, ethical, and legal contexts. This work assumes that these attributes can be clearly defined and universally agreed upon, which is not always feasible. Therefore, developers should consider the societal impact carefully before implementing such erasure frameworks in practice.

{\X} is intended to be used in applications where the developer is aware of the
concept that needs to be erased. {\X} can only erase concepts where the labels are annotated
either as categorical attributes. One potential misuse of {\X} would be
to define relevant features for a task (e.g., educational background) as a concept to be erased.
In such scenarios, the decision-making system can end up relying on arbitrary personal information to make hiring decisions. It is important to have oversight over such practices by using standard fairness measures like demographic parity.

\end{document}